\documentclass{article}



    \usepackage[preprint]{neurips_2025}



\usepackage[utf8]{inputenc} 
\usepackage[T1]{fontenc}    
\usepackage{hyperref}       
\usepackage{url}            
\usepackage{booktabs}       
\usepackage{amsfonts}       
\usepackage{nicefrac}       
\usepackage{microtype}      
\usepackage{xcolor}         
\usepackage{multirow}
\usepackage[linesnumbered,ruled,vlined]{algorithm2e}

\usepackage{bbm}
\usepackage{amsmath,amssymb,amsthm,bm}
\usepackage{enumitem}
\usepackage{graphicx}

\theoremstyle{plain}
\newtheorem{theorem}{Theorem}[section]
\newtheorem{assumption}[theorem]{Assumption}

\newtheorem{corollary}[theorem]{Corollary}

\theoremstyle{definition}

\title{Reward Model Generalization for Compute-Aware Test-Time Reasoning}

%

\author{%
  Zeen~Song \\
  Institute of Software, \\
  China Academy of Sciences\\
  University of Chinese Academy of Science\\
  \And
  Wenwen Qiang \\
  Institute of Software,\\
  China Academy of Sciences
  \And
  Siyu Zhao \\
  University of Chinese Academy of Science
  \And
  Changwen Zheng\\
  Institute of Software,\\
  China Academy of Sciences
  \And
  Gang Hua \\
  Multimodal Experiences Lab, \\
  Dolby Laboratories Inc\\
  Institute of Artificial Intelligence and Robotics, \\
  Xi’an Jiaotong University
}

\begin{document}

\maketitle

\begin{abstract}
  External test-time reasoning enhances large language models (LLMs) by decoupling generation and selection. At inference time, the model generates multiple reasoning paths, and an auxiliary process reward model (PRM) is used to score and select the best one. A central challenge in this setting is test-time compute optimality (TCO), i.e., how to maximize answer accuracy under a fixed inference budget. In this work, we establish a theoretical framework to analyze how the generalization error of the PRM affects compute efficiency and reasoning performance. Leveraging PAC-Bayes theory, we derive generalization bounds and show that a lower generalization error of PRM leads to fewer samples required to find correct answers. Motivated by this analysis, we propose Compute-Aware Tree Search (CATS), an actor-critic framework that dynamically controls search behavior. The actor outputs sampling hyperparameters based on reward distributions and sparsity statistics, while the critic estimates their utility to guide budget allocation. Experiments on the MATH and AIME benchmarks with various LLMs and PRMs demonstrate that CATS consistently outperforms other external TTS methods, validating our theoretical predictions.
\end{abstract}

\section{Introduction}

In recent years, chain-of-thought (CoT) prompting has substantially improved the performance of large language models (LLMs) on complex reasoning tasks such as math problem solving, question answering, and multi-hop retrieval \cite{weiChainofthoughtPromptingElicits2022, kojimaLargeLanguageModels2022, yaoTreeThoughtsDeliberate2023}. To support more effective CoT reasoning, recent works have explored test-time scaling (TTS) strategies that allocate more compute during inference \cite{openai2024learning, deepseek-aiDeepSeekR1IncentivizingReasoning2025, teamKimiK15Scaling2025,snellScalingLLMTestTime2024a, liuCan1BLLM2025, wuInferenceScalingLaws2024}. These approaches can be broadly divided into internal and external ones, with external methods attracting increasing attention due to their flexibility and ability to enhance performance without modifying the base model \cite{snellScalingLLMTestTime2024a, liuCan1BLLM2025, wuInferenceScalingLaws2024}.


The external TTS framework typically consists of three components: a frozen policy model, a scaling algorithm, and a process reward model (PRM). It generates multiple candidate reasoning paths using the policy model guided by the scaling algorithm (e.g., Best-of-N or Beam Search), and reranks these paths using the PRM to select the most promising one \cite{yaoTreeThoughtsDeliberate2023, bestaGraphThoughtsSolving2024}.  A central challenge in this setting is test-time compute optimality (TCO), which aims to select the optimal hyperparameters that maximize answer accuracy with a fixed policy model and given compute budget \cite{snellScalingLLMTestTime2024a, liuCan1BLLM2025, quOptimizingTestTimeCompute2025}. Empirical studies suggest that among these components, the PRM often plays a critical role in determining TCO performance \cite{liuCan1BLLM2025}. However, most PRMs are trained via supervised learning on limited datasets \cite{lightmanLetsVerifyStep2023, wangMathShepherdVerifyReinforce2024}, and their ability to generalize to unseen reasoning paths significantly affects the accuracy of path selection. Despite their growing importance, the effects of the generalization error of PRM on downstream reasoning performance remain underexplored.

To systematically characterize the role of PRMs in achieving TCO, we propose a unified theoretical framework. We first show that the generalization error of the PRM is upper-bounded via PAC-Bayesian analysis. Then, we establish the relationship between the generalization error, the final answer accuracy, and the available computing budget. One of our key insights is to quantify the risk of mis-ranking candidate reasoning paths due to generalization error. Our analysis shows that the answer accuracy of external TTS is lower-bounded by three components: \textbf{(i)} the probability that the policy model generates a correct answer, \textbf{(ii)} the \textbf{reward gap} between the selected path and the discarded ones, and \textbf{(iii)} the upper bound of the generalization error.  

While our theoretical framework provides a target for optimizing inference-time accuracy, it also reveals two key practical challenges. First, the generalization error of the PRM significantly affects answer accuracy under a fixed compute budget, yet it is unobservable at test time. Second, although the reward gap can be influenced by tuning sampling parameters such as top-$k$, top-$p$, and temperature, its effect varies across different PRMs. To address these challenges, we use parameter sparsity as a proxy for the generalization error of the PRM and propose Compute-Aware Tree Search (CATS), a dynamic reasoning control strategy based on the Advantage Actor-Critic (A2C) framework. CATS formulates the inference process as a Markov Decision Process (MDP) and controls the search configuration by learning an actor network. At each step, the actor network outputs a search configuration based on candidate rewards and model sparsity, while the critic estimates the value of the current state via a temporal-difference (TD) objective. By jointly training on multiple PRMs, CATS learns to adaptively adjust the number and selection of candidate reasoning paths, effectively improving the answer accuracy under limited compute.

To validate our theoretical analysis and the effectiveness of the proposed CATS strategy, we conduct extensive experiments on two challenging mathematical reasoning benchmarks: MATH500 \cite{hendrycksmath2021, lightmanLetsVerifyStep2023} and AIME24 \cite{ai-mo2024}. We evaluate CATS under multiple frozen policy models, including Qwen 2.5 \cite{yang2024qwen2}, Llama 3.1 \cite{grattafiori2024llama}, and Llama 3.2 \cite{meta2024llama3}, and incorporate a diverse set of PRMs \cite{zhangLessonsDevelopingProcess2025, wangMathShepherdVerifyReinforce2024, dong2024rlhf, skywork2024}. The results demonstrate that CATS consistently achieves higher accuracy than other external TTS methods across different model combinations and compute budgets. These results provide strong empirical support for our theoretical predictions. Our contributions can be summarized as follows:
\begin{itemize}
    \item We present a unified theoretical framework that establishes a quantitative relationship between the generalization error of the PRM, compute budget, and answer accuracy in external TTS. By analyzing the risk of mis-ranking candidate reasoning paths, we derive an explicit lower bound on answer accuracy in terms of the reward gap, sampling coverage, and the generalization error. 
    \item Motivated by our theoretical analysis, we propose CATS, a reasoning control strategy based on the A2C framework. CATS dynamically allocates compute across inference steps by adjusting path selection and generation parameters, using sparsity as a proxy signal for the generalization error of PRM.
    \item We evaluate CATS across diverse challenging reasoning benchmarks, different policy models, and several PRMs. Results show that CATS consistently improves accuracy across all settings, validating our theoretical predictions.
\end{itemize}

\section{Related Works}

\textbf{Scaling of Test-time Compute}. CoT prompting is first proposed as a prompting technique that enables LLMs to decompose problems into intermediate steps \cite{weiChainofthoughtPromptingElicits2022}. Recently, the OpenAI o1 series \cite{openai2024learning} demonstrate that increasing the length of CoT during inference yields substantial performance gains on tasks like MATH \cite{hendrycksmath2021} and AIME \cite{ai-mo2024}. 
TTS approaches can be broadly categorized into internal and external methods \cite{snellScalingLLMTestTime2024a,liuCan1BLLM2025,chenReasoningEraSurvey2025}. Internal TTS encourages models to extend CoT reasoning via supervised fine-tuning (SFT) or reinforcement learning (RL). Some methods construct training data to promote step-wise self-refinement \cite{madaanSelfRefineIterativeRefinement2023,saunders2022self}. 
DeepSeek-R1 \cite{deepseek-aiDeepSeekR1IncentivizingReasoning2025} combines formatting-based and rule-based rewards, and optimizes the model using GRPO \cite{shaoDeepSeekMathPushingLimits2024}.
In contrast, external TTS improves the reasoning performance via sampling or search-based methods with fixed LLMs and an external verifier \cite{lightmanLetsVerifyStep2023,wuInferenceScalingLaws2024, snellScalingLLMTestTime2024a, yaoTreeThoughtsDeliberate2023, selAlgorithmThoughtsEnhancing2024, bestaGraphThoughtsSolving2024, zhangAutomaticChainThought2022, brownLargeLanguageMonkeys2024}. Specifically, Tree‑of‑Thoughts \cite{yaoTreeThoughtsDeliberate2023} explores a look‑ahead search tree of thought chunks with self‑evaluation to achieve large gains on planning‑style tasks. 
Snell et al. \cite{snellScalingLLMTestTime2024a} analyze compute‑optimal test‑time scaling, finding that adaptive allocation of verifier‑guided search can beat a 14× larger model while using 4× less extra compute than best‑of‑N sampling.


\textbf{Process Reward Model}.
An essential component of external TTS is the verifier that evaluates different reasoning paths. Verifiers are categorized into two types: Process Reward Models (PRMs) and Outcome Reward Models (ORMs) \cite{uesato2022solving}. PRMs assess the quality of a reasoning step given the question and partial reasoning trajectory, estimating the likelihood that the process will lead to a correct answer \cite{lightmanLetsVerifyStep2023, wangMathShepherdVerifyReinforce2024}. In contrast, ORMs provide a reward signal based on the final answer's correctness, given the full reasoning trace and output \cite{uesato2022solving,lightmanLetsVerifyStep2023}. Recent studies have shown that PRMs are generally more effective than ORMs in guiding search \cite{lightmanLetsVerifyStep2023, uesato2022solving, snellScalingLLMTestTime2024a}, and PRMs have become a widely adopted tool in external test-time reasoning frameworks \cite{liuCan1BLLM2025, xieSelfEvaluationGuidedBeam2023,snellScalingLLMTestTime2024a}. Lightman et al \cite{lightmanLetsVerifyStep2023} trains PRM on 800k human‑labeled reasoning steps. Math-Shepherd \cite{wangMathShepherdVerifyReinforce2024} automatically constructs step-level supervision by forward decoding multiple reasoning branches from each intermediate step and assigning scores based on the proportion of branches that reach the known correct answer.

\section{Problem Formulation}
In this section, we first formalize the reasoning task and briefly introduce external TTS methods. We highlight TCO as a central objective in this setting. We then introduce PRM as a key component of the external TTS framework and describe its training process. Finally, we present the central motivation of this work: understanding how the generalization performance of PRM affects TCO. 
\subsection{The Problem Definition of Reasoning and Scaling of Test-time Compute}

Given an input question \(q\in\mathcal{Q}\), the reasoning problem can be presented as outputting an answer \(a\in\mathcal{A}\) that matches the ground-truth answer \(a^*(q)\in\mathcal{A}\), using a policy model \(\pi_\theta\). Here, \(\mathcal{Q}\) and \(\mathcal{A}\) denote the sample spaces of questions and answers, respectively, and \(\pi_\theta\) is a pre-trained LLM. To tackle this challenge, a promising direction is to scale the test-time compute by allocating more computational resources during answer generation. Specifically, the policy model generates a CoT reasoning path \(h = (z_1, z_2, \dots, z_T)\) in an autoregressive manner, where each step \(z_t\) is sampled from \(\pi_\theta(\cdot \mid q, z_1, \dots, z_{t-1})\), followed by sampling the final answer \(a\sim\pi_\theta(\cdot\mid z_1, \dots, z_T)\). This class of methods is also referred to as TTS methods. Existing TTS methods can be broadly categorized into two types: internal TTS and external TTS \cite{snellScalingLLMTestTime2024a,liuCan1BLLM2025,chenReasoningEraSurvey2025}. While internal TTS modifies \(\pi_\theta\) via fine-tuning to encourage longer reasoning paths for complex problems \cite{openai2024learning,deepseek-aiDeepSeekR1IncentivizingReasoning2025}, in this work, we primarily focus on the external TTS methods, which samples a collection of reasoning paths \(\mathcal{H} = \{h_1, h_2, \dots, h_N\}\) and employs an external PRM \(R_\phi\) to score each path $h\in\mathcal{H}$, selecting the one with the highest reward \cite{wuInferenceScalingLaws2024,snellScalingLLMTestTime2024a,liuCan1BLLM2025}. The key distinction is that internal TTS fine-tunes $\pi_\theta$ to generate a long CoT reasoning path, while external TTS doesn't require fine-tuning.
There are various external TTS methods \cite{lightmanLetsVerifyStep2023, yaoTreeThoughtsDeliberate2023, snellScalingLLMTestTime2024a, wangSelfConsistencyImprovesChain2022}, and we briefly introduce two representative ones. The first is \textit{Best-of-N} sampling. Given a question \(q\), \(N\) independent and complete reasoning paths \(\{h_i\}_{i=1}^N\) are sampled from \(\pi_\theta(\cdot\mid q)\). Each candidate path \(h_i\) is evaluated by the PRM \(R_\phi:\mathcal{Q}\times \mathcal{H}\to[0,1]\), and outputs a reward \(r_i\in[0,1]\). The path with the highest reward is selected for the final output. The second is \textit{Beam Search}. Starting from the initial input \(q\), \(N\) candidate first steps \(\{z_{1,i}\}_{i=1}^N\) are sampled from \(\pi_\theta(\cdot\mid q)\). The PRM scores each first step, and the top \(N/M\) highest-scoring steps are retained, where \(M\) is the beam width. For each retained step, \(M\) next steps are sampled to expand into a total of \(N\) second-step paths. This procedure is repeated iteratively: at each step, paths are expanded, scored, filtered, and expanded again, until \(N\) complete reasoning paths are produced. Finally, the PRM evaluates the complete paths, and the highest-scoring path is selected.

A key challenge of external TTS is how to scale compute optimally. That is, given a fixed compute budget $C$, how to select the optimal hyperparameter $\psi$ that maximizes the probability of producing the correct answer for a given problem $q$. We follow \cite{snellScalingLLMTestTime2024a} and formalize the objective as:
\begin{equation}
\label{eq:compute_optimal}
    \psi^*_{q,a^*(q)}(C) = \mathop{\arg\max}_\psi(\mathbb{E}_{a\sim \text{Target}(\psi,C,q)}\big[\mathbbm{1}_{(a=a^*(q))}\big]),
\end{equation}
where $\mathbbm{1}_{(a=a^*(q))}$ is the indicator function that equals $1$ if the selected answer $a$ matches the ground-truth answer $a^*(q)$, and $0$ otherwise. $\text{Target}(\psi,C,q)$ denotes the distribution over outputs induced by executing the reasoning process under hyperparameter $\psi$ and budget $C$ on question $q$, and $\psi^*_{q,a^*(q)}(C)$ represents the optimal TTS strategy. The hyperparameter configuration $\psi$ includes, but is not limited to: \textbf{(i)} the choice of TTS strategy, such as Best-of-N, Beam Search, or other chain-of-thought-based methods; \textbf{(ii)} sampling parameters used during generation, such as top-$k$ truncation, top-$p$ truncation, and temperature; and \textbf{(iii)} for Beam Search, the beam width $M$ maintained at each reasoning step. The compute budget $C$ can be interpreted in multiple ways depending on the context. It can refer to the maximum number of tokens generated during test-time inference, or, as defined in \cite{snellScalingLLMTestTime2024a}, the total number of reasoning paths.  

\subsection{PRM: Process Reward Model}
\label{sec:prm}

The PRM \(R_\phi\) plays a central role in external TTS. It outputs a score for each reasoning step $z_t$ by inputting the reasoning prefix \(h_{t} = (z_{1}, \dots, z_{t})\) and the input question \(q\). A PRM is typically implemented by appending a linear prediction head to another LLM (different from $\pi_\theta$) and then fine-tuning the entire network on supervised training data \cite{uesato2022solving, lightmanLetsVerifyStep2023, zhangLessonsDevelopingProcess2025,wangMathShepherdVerifyReinforce2024}. The dataset $\mathcal{D} = \{(q_i, \{(h_{i,t}, y_{h_{i,t}})\}_{t=1}^{T_i})\}_{i=1}^{n}$ for training PRM consists of $n$ questions and corresponding reasoning steps $\{h_{i,t}\}_{t=1}^{T_i}$, where \(q_i\) denotes the \(i\)-th input question, \(h_{i,t} = (z_{i,1}, \dots, z_{i,t})\) represents the reasoning prefix up to step \(t\) for question \(q_i\) and \(y_{h_{i,t}} \in \{0,1\}\) is the quality label for step \(h_{i,t}\), with 1 indicating ``good" and 0 indicating ``bad". The label collection process can be found in \cite{lightmanLetsVerifyStep2023, wangMathShepherdVerifyReinforce2024}. Given the above training data, the PRM training objective for each question $q_i$ is formulated as:
\begin{equation}
    \label{eq:prm}
\mathcal{L}_{\text{PRM}} = \sum_{t=1}^{T_i} \Big( y_{h_{i,t}} \log r_{h_{i,t}} + (1 - y_{h_{i,t}}) \log (1 - r_{h_{i,t}}) \Big),
\end{equation}
where \(r_{h_{i,t}} = R_\phi(q_i, h_{i,t})\). After training, the parameters of PRM are frozen. Within the external TTS framework, the trained PRM is expected to assign reliable scores to the novel reasoning steps produced by the frozen policy $\pi_\theta$ when confronted with unseen questions.

\subsection{Motivation: The Relationship between TCO and PRM}
\label{sec:motivation}
Intuitively, the score of reasoning steps produced by the PRM directly influences the selection of reasoning paths, thereby can affect both the final answer accuracy and the computational consumption during inference. 
Since a PRM is usually obtained by fine-tuning an LLM on a limited set of training data, its prediction may generalize poorly to unseen questions. Therefore, investigating how such generalization ability of PRM affect TCO is critical for designing more efficient TTS methods.
Specifically, we aim to address the following three key questions: \textbf{(i)} Under a fixed compute budget and a fixed reasoning strategy, how does the generalization ability of the PRM affect the accuracy of the final answer? \textbf{(ii)} Given the accuracy of the target answer, how does the generalization ability of the PRM influence the required compute budget? \textbf{(iii)} How can we dynamically allocate compute during inference based on the reward model's generalization behavior, to improve the final answer accuracy under a fixed total compute budget?
The first two questions aim to characterize how the generalization ability of the PRM affects TCO, while the third question focuses on designing an external TTS method that achieves TCO by leveraging the theoretical insights.

\section{Theoretical Analysis}
\label{sec:theory}
In this section, we develop a theoretical framework to answer the three questions above. We begin by modeling an upper bound on the generalization error of PRM within the PAC-Bayes framework. Next, to address the first question, we analyze how this bound affects answer accuracy through the risk of mis-ranking candidate paths. For the second question, we derive how this bound affects the compute budget required for a desired accuracy level. Finally, to address the third question, we examine how the theoretical findings inform the design of external TTS methods.

\subsection{Generalization Bounds for Reward Models}
\label{sec:bound}

As discussed in Section~\ref{sec:prm}, the PRM is typically trained using supervised learning on a limited set of annotated examples. However, at test time, the PRM must evaluate inputs that include not only previously unseen questions but also new reasoning paths generated by the policy model. In such cases, the PRM is still expected to assign reliable scores to candidate reasoning paths. We refer to this ability as the generalization ability of PRM, which we identify as a key factor influencing both reasoning performance and computational efficiency.

Let $\mathcal{D}$ be an unknown data distribution over $\mathcal{Q} \times \mathcal{H} \times \mathcal{Y}$, where each data point consists of a question $q$, a reasoning path $h$, and a binary label $y$ indicating whether the path is helpful for solving the question. Let $\phi \in \Phi$ denote the parameters of PRM $R_\phi$, $\Phi$ is the parameter space. Let $\ell(R_\phi(q,h), y) =|R_\phi(q,h)-y|\in[0,1]$ be the absolute error between the model's output to a ground-truth label $y \in \{0,1\}$. Under the following assumption:
\begin{assumption}
\label{ass:indist}
The data sample $(q,h,y)$ are drawn i.i.d. from the distribution \( \mathcal{D} \).
\end{assumption}
Let $\mathcal{L}_{\mathcal{D}}(\phi) = \mathbb{E}_{(q,h,y) \sim \mathcal{D}} \left[\ell(R_\phi(q,h), y)\right]$ be the population risk, and $\mathcal{L}_{\mathcal{S}}(\phi) = \frac{1}{n} \sum_{i=1}^n \ell(R_\phi(q_i,h_i), y_i)$ be the empirical risk on a training dataset $\mathcal{S} = \{(q_i, h_i, y_i)\}_{i=1}^{n}$. The generalization error $\varepsilon_{\text{gen}}(\phi)$ is defined as:
\begin{equation}
    \varepsilon_{\text{gen}}(\phi) := \mathcal{L}_{\mathcal{D}}(\phi) - \mathcal{L}_{\mathcal{S}}(\phi)
\end{equation}
It quantifies how much the model's performance on unseen data deviates from its performance on the training set $\mathcal{S}$. In practice, a PRM is typically trained to minimize the empirical loss on $\mathcal{S}$ \cite{lightmanLetsVerifyStep2023,wangMathShepherdVerifyReinforce2024}, resulting in a small training loss. Thus, the generalization error captures the extent to which the predicted rewards deviate from the ground-truth rewards on unseen reasoning paths and questions.

We adopt the PAC-Bayes framework to analyze $\varepsilon_{\text{gen}}(\phi)$. Let $(\Phi, \mathcal{B})$ be the measurable space of model parameters $\phi$, where $\mathcal{B}$ is the Borel $\sigma$-algebra on $\Phi$. Let $\mathcal{P}(\Phi)$ denote the set of all probability measures over $(\Phi, \mathcal{B})$. A prior distribution $P \in \mathcal{P}(\Phi)$ is a probability distribution over model parameters $\phi$, which is the learner’s initial assumption before seeing data. After seeing a training dataset $\mathcal{S} = \{(q_i, h_i, y_i)\}_{i=1}^n \sim \mathcal{D}^n$, the learner selects a posterior distribution $Q \in \mathcal{P}(\Phi)$ over parameters $\phi$, which is the learner’s belief after observing the training set. 
\begin{theorem}[PAC-Bayes Generalization Bound for PRMs]
\label{thm:pac-bayes}
Let $P, Q \in \mathcal{P}(\Phi)$ be any prior and posterior distributions over $\phi$, and let $\ell$ be a bounded loss function taking values in $[0,1]$. Then, for any $\delta \in (0,1]$, with probability at least $1 - \delta$ over the choice of training set $\mathcal{S} \sim \mathcal{D}^n$, the following inequality holds:
\begin{equation}
\label{eq:upper_bound}
    \mathbb{E}_{\phi \sim Q} \left[ \mathcal{L}_{\mathcal{D}}(\phi) \right]
    \;\leq\;
    \mathbb{E}_{\phi \sim Q} \left[ \mathcal{L}_{\mathcal{S}}(\phi) \right]
    +
    \sqrt{\frac{ \mathrm{KL}(Q \| P) + \log\frac{n}{\delta} }{2(n - 1)}}.
\end{equation}
\end{theorem}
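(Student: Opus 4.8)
The plan is to follow the classical McAllester-style argument, which rests on the change-of-measure (Donsker--Varadhan) inequality together with a sharp exponential-moment estimate for bounded losses, instantiated so that the constants reproduce the $2(n-1)$ denominator in the statement. Throughout, write $\Delta(\phi) := \mathcal{L}_{\mathcal{D}}(\phi) - \mathcal{L}_{\mathcal{S}}(\phi)$ for the per-hypothesis generalization gap, so that the left-hand side minus the first term on the right-hand side of \eqref{eq:upper_bound} is exactly $\mathbb{E}_{\phi\sim Q}[\Delta(\phi)]$. The entire argument amounts to controlling a single scalar exponential moment and transferring it from the data-independent prior $P$ to the data-dependent posterior $Q$.

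First I would establish an auxiliary exponential-moment bound for each \emph{fixed} $\phi$. By Assumption~\ref{ass:indist} the losses $\ell(R_\phi(q_i,h_i),y_i)\in[0,1]$ are i.i.d.\ with mean $\mathcal{L}_{\mathcal{D}}(\phi)$, so $\mathcal{L}_{\mathcal{S}}(\phi)$ is their empirical average and $\Delta(\phi)$ is a centered, bounded-increment average. The goal is the estimate
\[
\mathbb{E}_{\mathcal{S}\sim\mathcal{D}^n}\!\left[e^{2(n-1)\,\Delta(\phi)^2}\right]\le n .
\]
I would prove it by linearizing the square through the Gaussian identity $e^{\beta t^2}=\mathbb{E}_{g\sim\mathcal{N}(0,1)}[e^{\sqrt{2\beta}\,t\,g}]$, swapping the order of the two expectations, and then applying Hoeffding's lemma to the inner linear moment $\mathbb{E}_{\mathcal{S}}[e^{s\,\Delta(\phi)}]\le e^{s^2/(8n)}$; the remaining Gaussian integral $\mathbb{E}_g[e^{c g^2}]=(1-2c)^{-1/2}$ with $\beta=2(n-1)$ collapses to the factor $n$ (in fact $\sqrt n$, which suffices). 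This step is the technical crux: a naive Hoeffding bound only controls the \emph{linear} moment $\mathbb{E}[e^{s\Delta}]$, and obtaining a clean exponent of $\Delta^2$ with precisely the constant $2(n-1)$ is what pins down the final denominator; an alternative is to cite Maurer's refinement of the Hoeffding inequality.

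Next I would lift this to the posterior via the Donsker--Varadhan variational inequality, which states that for any measurable $f\colon\Phi\to\mathbb{R}$,
\[
\mathbb{E}_{\phi\sim Q}[f(\phi)]\le \mathrm{KL}(Q\|P)+\log\mathbb{E}_{\phi\sim P}\!\left[e^{f(\phi)}\right].
\]
Applying this with $f(\phi)=2(n-1)\Delta(\phi)^2$ and combining with the auxiliary bound through Fubini's theorem---valid because $P$ is independent of $\mathcal{S}$---gives $\mathbb{E}_{\mathcal{S}}\,\mathbb{E}_{\phi\sim P}[e^{2(n-1)\Delta(\phi)^2}]\le n$. Since the inner quantity is non-negative, Markov's inequality then guarantees that, with probability at least $1-\delta$ over the draw of $\mathcal{S}$, $\mathbb{E}_{\phi\sim P}[e^{2(n-1)\Delta(\phi)^2}]\le n/\delta$.

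Finally I would assemble the pieces on this high-probability event. Substituting the Markov bound into the variational inequality and applying Jensen's inequality $\big(\mathbb{E}_{\phi\sim Q}[\Delta(\phi)]\big)^2\le \mathbb{E}_{\phi\sim Q}[\Delta(\phi)^2]$ yields
\[
2(n-1)\big(\mathbb{E}_{\phi\sim Q}[\Delta(\phi)]\big)^2\le \mathrm{KL}(Q\|P)+\log\frac{n}{\delta},
\]
and taking square roots, together with the definition of $\Delta$, produces exactly \eqref{eq:upper_bound}. The only delicate step is the auxiliary exponential-moment lemma; once it is available the change of measure, Fubini/Markov, and Jensen steps are all routine, the chief bookkeeping being the correct order of the $\mathcal{S}$- and $\phi$-expectations and ensuring the $\delta$ failure probability is charged only once.
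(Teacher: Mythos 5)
Your proof is correct, and it takes a genuinely different---and in fact more rigorous---route than the paper's own proof. Both arguments share the same skeleton (Donsker--Varadhan change of measure, Hoeffding control of an exponential moment, Fubini plus Markov so that the failure probability $\delta$ is charged exactly once), but they differ in the functional to which Donsker--Varadhan is applied. Writing $\Delta(\phi)$ for the gap $\mathcal{L}_{\mathcal{D}}(\phi)-\mathcal{L}_{\mathcal{S}}(\phi)$, the paper applies the change of measure to the \emph{linear} function $f(\phi)=\lambda\,\Delta(\phi)$, arrives at $\lambda\,\mathbb{E}_{\phi\sim Q}[\Delta(\phi)] \le \lambda^{2}/(8n) + \mathrm{KL}(Q\|P) + \log(1/\delta)$, and then ``optimizes over $\lambda$,'' plugging in $\lambda^{\star}=4\sqrt{(n-1)/2}$. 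That final step is where the paper's derivation is loose: the truly minimizing $\lambda$ depends on $\mathrm{KL}(Q\|P)+\log(1/\delta)$, hence on the data-dependent posterior, so it cannot be fixed in advance as Markov's inequality requires; substituting the fixed $\lambda^{\star}$ actually yields a bound that is \emph{linear} rather than square-root in the KL term; and the $\log(n/\delta)$ in the final display appears without derivation (only $\log(1/\delta)$ occurs up to that point). Your proof avoids all of these issues by applying Donsker--Varadhan to the \emph{quadratic} functional $2(n-1)\Delta(\phi)^{2}$: the Gaussian-linearization trick combined with Hoeffding's lemma correctly gives $\mathbb{E}_{\mathcal{S}}\bigl[e^{2(n-1)\Delta(\phi)^{2}}\bigr]\le \bigl(1-\tfrac{n-1}{n}\bigr)^{-1/2}=\sqrt{n}\le n$, Markov's inequality then produces the $\log(n/\delta)$ term naturally (indeed $\log(\sqrt{n}/\delta)$, slightly sharper), and Jensen's inequality converts $\mathbb{E}_{\phi\sim Q}[\Delta(\phi)^{2}]$ into $\bigl(\mathbb{E}_{\phi\sim Q}[\Delta(\phi)]\bigr)^{2}$ to yield exactly Equation~\ref{eq:upper_bound}. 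This quadratic route is the standard rigorous derivation of McAllester's bound (and of Maurer's refinement), and it is precisely what is needed to make the $2(n-1)$ denominator and the $\log(n/\delta)$ numerator come out correctly; the paper's linear-in-$\lambda$ route, as written, does not actually establish the stated inequality.
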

The proof is provided in Appendix~\ref{app:proof_1}. Equation \ref{eq:upper_bound} can be rewritten as the expected generalization error:
\begin{equation}
\label{eq:upper_bound_2}
    \mathbb{E}_{\phi\sim Q}\!\bigl[\varepsilon_{\text{gen}}(\phi)\bigr]
\le
\sqrt{\,
        \frac{\mathrm{KL}(Q\Vert P)\;+\;\log\frac{n}{\delta}}
             {2\,(n-1)}
      }.
\end{equation}
Equation \ref{eq:upper_bound_2} shows that, with probability at least $1-\delta$ over the draw of the training set $\mathcal{S}$, the expected deviation of the predicted reward from the true reward under the learner’s belief $Q$ is upper-bounded by the sample size $n$ and the divergence $\mathrm{KL}(Q\Vert P)$ between posterior and prior. In practice, the PRM is typically fixed after training, which corresponds to using a Dirac posterior $Q = \delta_{\hat{\phi}}$. In this case, the PAC-Bayes bound reduces to a pointwise guarantee $\varepsilon_{\text{gen}}(\hat\phi)\leq\sqrt{
(\log (1/P(\hat{\phi})) + \log (n/\delta))/{2(n - 1)}}$, and the KL term becomes $\log(1/P(\hat{\phi}))$, reflecting how well the learned parameters align with the prior. Next, we analyze how this bound influences the final answer accuracy and how it relates to the problem of TCO.

\subsection{Impact of Reward Model Generalization on Answer Accuracy}
\label{sec:accuracy}

In the external TTS framework, the PRM selects the path with the highest predicted score, and the answer is correct only if this selected path is also the truly highest‑reward path. When the output of PRM is accurate, the top‑scored path indeed has the highest true reward. When the predicted scores deviate from the true rewards, two cases arise: \textbf{(i)} the predicted top path still coincides with the true top path; or \textbf{(ii)} prediction error causes the true best path to be ranked lower and hence not selected. The second case may lead the system to choose a suboptimal path and thereby reduce answer accuracy. To quantify this effect, we develop a theoretical framework that relates the generalization error of PRM to the accuracy of the selected answer.

Let $\mathcal{H} = \{h_1, h_2, \dots, h_N\}$ denote the set of candidate reasoning paths independently sampled from the policy model $\pi_\theta(\cdot \mid q)$ for a given question $q$, i.e., $\mathcal{H} \sim \pi_\theta^{\otimes N}$. Here, $\pi_\theta^{\otimes N}$ denotes the joint distribution of $N$ independent samples from $\pi_\theta(\cdot|q)$. The goal of external TTS is to select the highest-scoring path $h_{\text{sel}}$ according to a learned PRM $R_\phi(q, h)$, i.e., $h_{\text{sel}} = \arg\max_{h \in \mathcal{H}} R_\phi(q, h)$, with the hope that the selected path yields the correct answer, i.e., $a(h_{\text{sel}}) = a^*(q)$. Therefore, assuming access to a ground-truth reward function $R^*(q, h) \in [0, 1]$, it is reasonable to assume that a reasoning path with a sufficiently high ground-truth reward should lead to a correct answer. 
\begin{assumption}[Path-to-Answer Correctness]
\label{assump:path-to-answer}
There exists a threshold $\tau \in (0,1]$ such that for any $h \in \mathcal{H}$, if $R^*(q, h) \ge \tau$, then $a(h,q) = a^*(q)$, where $a(h)$ denotes the final answer of path $h$.
\end{assumption}
Furthermore, motivated by empirical observations \cite{brownLargeLanguageMonkeys2024}, we assume that as the number of sampled paths increases, the probability that $\mathcal{H}$ contains at least one high-reward path approaches 1. Formally:
\begin{assumption}[Asymptotic Coverage]
\label{assump:coverage}
Let $p_{N,\tau}(q) := \Pr_{\mathcal{H} \sim \pi_\theta^{\otimes N}} \big[ \exists h \in \mathcal{H},\; R^*(q, h) \ge \tau \big]$. Then $\lim_{N \to \infty} p_N(q) = 1$.
\end{assumption}
Under these assumptions, the only remaining source of error lies in the ranking behavior of the PRM: even if a high-quality path is present in the candidate set $\mathcal{H}$, the PRM may fail to rank it highest due to the generalization error, which we can upper-bounded according to Equation~\ref{eq:upper_bound_2}. Define this upper-bound as $\varepsilon$, we propose the following theorem.  
\begin{theorem}[Answer–Accuracy Bound with Reward‐Gap]
\label{thm:accuracy}
Let \(q\) be a fixed question, and let $\mathcal{H} = \{h_{1},\dots,h_{N}\} \sim\pi_{\theta}^{\otimes N}$ be \(N\) i.i.d.\ candidate reasoning paths.  Define $h^{*} \;=\;\arg\max_{h\in\mathcal{H}}R^{*}(q,h)$, and $h_{\mathrm{sel}} =\arg\max_{h\in\mathcal{H}}R_{\phi}(q,h)$. The reward‐gap $\gamma(q)=R^{*}\bigl(q,h^{*}\bigr)-\max_{h\in\mathcal{H}\setminus\{h^{*}\}}
       R^{*}(q,h)\ge 0.$ Let
$p_{N,\tau}(q)=\Pr_{\mathcal{H}\sim\pi_{\theta}^{\otimes N}}\bigl[\exists\,h\in\mathcal{H}:\;R^{*}(q,h)\ge\tau\bigr].$
Suppose further that the following hold:
\begin{itemize}
    \item There exist \(\varepsilon\in(0,1]\) and \(\delta\in(0,1)\) such that $\Pr\!\Bigl[\sup_{h\in\mathcal{H}}\bigl|R_{\phi}(q,h)-R^{*}(q,h)\bigr|\le\varepsilon\Bigr]\;\ge\;1-\delta$. And denote the high‐probability event by \(\mathcal{G}=\{\sup_{h}|R_{\phi}-R^{*}|\le\varepsilon\}\).
    \item Conditioned on \(\mathcal{H}\), the deviations \(\Delta_{h}=R_{\phi}(q,h)-R^{*}(q,h)\) are independent, mean‐zero, and satisfy \(|\Delta_{h}|\le\varepsilon\) almost surely.
    \item Assumptions \ref{assump:path-to-answer} and Assumption \ref{assump:coverage} hold.
\end{itemize}
Then the probability of selecting a correct answer satisfies
\begin{equation}
\label{eq:accuracy‐gap‐bound}
  \Pr\bigl[a(h_{\mathrm{sel}})=a^{*}(q)\bigr]
  \;\ge\;
  p_{N,\tau}(q)\,\Bigl[
    1 \;-\;\delta
        \;-\;(N-1)\,\exp\!\bigl(-\frac{\gamma(q)^{2}}{8\varepsilon^{2}}\bigr)
  \Bigr].
\end{equation}
\end{theorem}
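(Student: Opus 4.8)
The plan is to identify the two failure modes of the procedure—failing to \emph{sample} any high-reward path, and failing to \emph{rank} the best sampled path on top—and to bound each. First I would fix the candidate set $\mathcal{H}$ and introduce the coverage event $\mathcal{C}=\{\exists\,h\in\mathcal{H}:R^{*}(q,h)\ge\tau\}$, whose probability is exactly $p_{N,\tau}(q)$. On $\mathcal{C}$ the true maximizer $h^{*}$ obeys $R^{*}(q,h^{*})\ge\tau$, so Assumption~\ref{assump:path-to-answer} forces $a(h^{*},q)=a^{*}(q)$; consequently the selected answer is correct as soon as the PRM ranks $h^{*}$ first, i.e.\ on the event $\mathcal{R}=\{h_{\mathrm{sel}}=h^{*}\}$. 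This yields the containment $\mathcal{C}\cap\mathcal{R}\subseteq\{a(h_{\mathrm{sel}})=a^{*}(q)\}$ and reduces the theorem to lower-bounding $\Pr[\mathcal{C}\cap\mathcal{R}]$.

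I would then factor through the coverage event, writing $\Pr[\mathcal{C}\cap\mathcal{R}]=p_{N,\tau}(q)\bigl(1-\Pr[\mathcal{R}^{c}\mid\mathcal{C}]\bigr)$, so that the coverage factor multiplies everything that follows, as in the target bound. To control $\Pr[\mathcal{R}^{c}\mid\mathcal{C}]$ I would split on the high-probability event $\mathcal{G}=\{\sup_{h}|R_{\phi}-R^{*}|\le\varepsilon\}$: the bad branch contributes at most $\delta$, and the remaining term $\Pr[\mathcal{R}^{c}\cap\mathcal{G}\mid\mathcal{C}]$ is the conditional mis-ranking probability handled below.

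The core estimate is this mis-ranking bound. Conditioned on $\mathcal{H}$ and on $\mathcal{G}$, the event $\mathcal{R}^{c}$ requires some competitor $h\neq h^{*}$ to satisfy $R_{\phi}(q,h)\ge R_{\phi}(q,h^{*})$. Substituting $R_{\phi}=R^{*}+\Delta$ and using $R^{*}(q,h^{*})-R^{*}(q,h)\ge\gamma(q)$ for every $h\neq h^{*}$, this is only possible if $\Delta_{h}-\Delta_{h^{*}}\ge\gamma(q)$. A union bound over the $N-1$ competitors gives $\Pr[\mathcal{R}^{c}\mid\mathcal{H},\mathcal{G}]\le(N-1)\max_{h\neq h^{*}}\Pr[\Delta_{h}-\Delta_{h^{*}}\ge\gamma(q)]$. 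For a single competitor I would treat $D_{h}=\Delta_{h}-\Delta_{h^{*}}$ as one mean-zero random variable bounded in $[-2\varepsilon,2\varepsilon]$—the stated mean-zero and boundedness assumptions give exactly this—and apply Hoeffding's lemma with the Chernoff bound. Because the range of $D_{h}$ is $4\varepsilon$, optimizing the exponent yields $\Pr[D_{h}\ge\gamma(q)]\le\exp(-\gamma(q)^{2}/(8\varepsilon^{2}))$, which is precisely where the constant $8$ comes from. Collecting the coverage factor, the $\delta$ from $\mathcal{G}^{c}$, and this union bound gives the claimed inequality.

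I expect the main obstacle to be the bookkeeping of the conditioning so that $p_{N,\tau}(q)$ cleanly multiplies \emph{both} the $\delta$ term and the exponential term rather than appearing additively; in particular one must argue that $\Pr[\mathcal{G}^{c}\mid\mathcal{C}]\le\delta$ and that the Hoeffding estimate holds uniformly over all realizations of $\mathcal{H}$ consistent with coverage, effectively decoupling the PRM-deviation randomness from the coverage event. A secondary point is rigor in the Hoeffding step: the factor $8$ needs only that $D_{h}$ is mean-zero and $2\varepsilon$-bounded, so the independence of the $\Delta_{h}$ is not actually required for this constant (a tighter $4\varepsilon^{2}$-type bound would use it), and one should also dispose of ties in the $\arg\max$ and reconcile the almost-sure bound $|\Delta_{h}|\le\varepsilon$ with the high-probability event $\mathcal{G}$ so that the two listed assumptions are applied consistently.
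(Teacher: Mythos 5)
Your proof is correct and shares the skeleton of the paper's own argument: split failure into lack of coverage, the PRM-error event $\mathcal{G}^c$, and mis-ranking on $\mathcal{G}$, then control mis-ranking with a union bound over the $N-1$ competitors plus Hoeffding. But you diverge in two places, both in your favor. First, your Hoeffding step is the transparent derivation of the constant $8$: reducing mis-ranking by a competitor $h$ to the event $D_h = \Delta_h - \Delta_{h^*} \ge \gamma(q)$ for a single mean-zero variable of range $4\varepsilon$ gives $\exp\bigl(-2\gamma(q)^2/(4\varepsilon)^2\bigr) = \exp\bigl(-\gamma(q)^2/(8\varepsilon^2)\bigr)$; the paper instead argues that a competitor must ``overcome a gap of at least $\gamma(q)-2\varepsilon$'' and then quotes this exponent without showing the computation. (Your side remark is also right: this step needs no independence across paths, only that each $D_h$ is mean-zero and $4\varepsilon$-ranged; independence would give the sharper $\exp(-\gamma(q)^2/(4\varepsilon^2))$.) Second, your multiplicative bookkeeping $\Pr[\mathcal{C}\cap\mathcal{R}] = p_{N,\tau}(q)\bigl(1-\Pr[\mathcal{R}^c\mid\mathcal{C}]\bigr)$ is what actually produces the stated form $p_{N,\tau}(q)\bigl[1-\delta-(N-1)\exp(-\gamma(q)^2/(8\varepsilon^2))\bigr]$; the paper's additive union bound yields $p_{N,\tau}(q)\bigl[1-(N-1)\exp(-\gamma(q)^2/(8\varepsilon^2))\bigr]-\delta$, which for $p_{N,\tau}(q)<1$ is strictly weaker than the theorem's claim, so your route is the one that matches the statement as written. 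The decoupling caveat you flag --- needing $\Pr[\mathcal{G}^c\mid\mathcal{C}]\le\delta$ and the Hoeffding estimate uniformly over realizations of $\mathcal{H}$ --- is genuine, but it is equally present (and equally glossed over) in the paper's conditional term $\Pr[E_2\cap\mathcal{G}\mid E_1^c]$; under the theorem's second bullet (deviations conditionally mean-zero and almost surely bounded given $\mathcal{H}$) both versions resolve immediately, as you note.
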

The proof is provided in Appendix~\ref{app:proof_2}. Theorem \ref{thm:accuracy} shows that the answer accuracy of external TTS is lower-bounded by $p_{N}(q)$ and $\exp(-\gamma(q)^{2}/(8\varepsilon^{2}))$, where $p_{N}(q)$ reflects the chance of sampling at least one high-quality path under fixed budget $N$, and $\varepsilon$ reflect the generalization error of PRM. This implies that the answer accuracy depends jointly on the sampling ability of the policy model and the generalization error of the PRM.

\subsection{Impact of Reward Model Generalization on Compute Budget}
\label{sec:compute}

Next, based on Theorem~\ref{thm:accuracy}, we propose the following corollary to describe the budget requirement as a function of the upper-bound of the generalization error of PRM $\varepsilon$ and reward gap $\gamma(q)$.
\begin{corollary}[Target Accuracy Constraint on Sampling and Margin]
\label{cor:budget}
Given a generalization error bound of PRM \(\varepsilon > 0\), a confidence parameter \(\delta\in(0,1)\), and a target answer accuracy level \(\alpha \in (0,1)\). Under the assumptions of Theorem~\ref{thm:accuracy}, if one wishes to guarantee $\Pr[a(h_{\mathrm{sel}})=a^*(q)] \;\ge\; \alpha$, then the sampling coverage probability must satisfy
\begin{equation}
\label{eq:alpha-constraint}
   p_{N,\tau}(q)
   \;\ge\;
   \frac{\alpha}
        {1 - \delta - (N\!-\!1)\,
            \exp\!\left(-\gamma(q)^2\,/\,8\varepsilon^2\right)} .
\end{equation}
\end{corollary}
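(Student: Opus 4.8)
The plan is to obtain Corollary~\ref{cor:budget} as a direct algebraic consequence of the lower bound already established in Theorem~\ref{thm:accuracy}, so no new probabilistic machinery is required. Theorem~\ref{thm:accuracy} certifies
\[
  \Pr\bigl[a(h_{\mathrm{sel}})=a^{*}(q)\bigr]
  \;\ge\;
  p_{N,\tau}(q)\Bigl[\,1 - \delta - (N-1)\exp\bigl(-\gamma(q)^{2}/(8\varepsilon^{2})\bigr)\Bigr],
\]
so a \emph{sufficient} condition for the desired guarantee $\Pr[a(h_{\mathrm{sel}})=a^{*}(q)]\ge\alpha$ is that this lower bound itself is at least $\alpha$. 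The first step is therefore simply to impose that the right-hand side dominates $\alpha$ and then solve the resulting inequality for the coverage probability $p_{N,\tau}(q)$.

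For the rearrangement, I would abbreviate the bracketed factor as $B:=1-\delta-(N-1)\exp(-\gamma(q)^{2}/(8\varepsilon^{2}))$, so that the imposed condition reads $p_{N,\tau}(q)\,B\ge\alpha$. Dividing through by $B$ yields exactly the claimed constraint $p_{N,\tau}(q)\ge\alpha/B$, which is Equation~\ref{eq:alpha-constraint}. The one nontrivial point, and the step I would flag explicitly rather than treat as routine, is the sign of $B$: the division preserves the inequality direction only when $B>0$, i.e.\ when $(N-1)\exp(-\gamma(q)^{2}/(8\varepsilon^{2}))<1-\delta$.

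I would then dispatch the degenerate case for completeness. If $B\le 0$, the lower bound in Theorem~\ref{thm:accuracy} is itself nonpositive and can never certify $\Pr[\cdot]\ge\alpha>0$; correspondingly, the right-hand side of Equation~\ref{eq:alpha-constraint} is negative or undefined, so the constraint is vacuous and the target accuracy is unattainable through this bound for any $p_{N,\tau}(q)\in[0,1]$. Hence the corollary is informative precisely in the nondegenerate regime $B>0$, which holds when the reward gap $\gamma(q)$ is large relative to the generalization error bound $\varepsilon$, or when the candidate count $N$ is not too large---exactly the conditions under which the PRM can reliably separate the best path from its competitors. Because the argument is pure algebra once Theorem~\ref{thm:accuracy} is in hand, I do not anticipate any genuine obstacle; the only care needed is the bookkeeping around the positivity of $B$.
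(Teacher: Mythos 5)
Your proposal is correct and follows essentially the same route as the paper's proof: invoke the lower bound from Theorem~\ref{thm:accuracy}, impose that it be at least $\alpha$, and divide by the bracketed factor under the positivity assumption $1-\delta-(N-1)\exp(-\gamma(q)^2/(8\varepsilon^2))>0$, which the paper likewise states as a standing assumption. Your explicit treatment of the degenerate case $B\le 0$ is a minor bookkeeping addition beyond the paper's version, not a different argument.
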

This corollary shows that, for higher generalization error of PRM, more reasoning paths need to be sampled to guarantee a higher accuracy $\alpha$. Therefore, the generalization ability of the PRM also significantly affects the compute budget, which is used to achieve a higher accuracy.

\subsection{Inspiration for Designing External TTS Methods}
\label{sec:inspire}
The analysis in Section~\ref{sec:accuracy} and Section~\ref{sec:compute} addresses the first two problems we propose. For the third problem, Theorem~\ref{thm:accuracy} and Corollary~\ref{cor:budget} provide direct insight into the core objective of TCO as in Equation \ref{eq:compute_optimal}: selecting search hyperparameters that maximize answer accuracy under a fixed compute budget. 
Specifically, Equation~\ref{eq:accuracy‐gap‐bound} shows that answer accuracy increases with larger reward margin $\gamma(q)$, while it is negatively impacted by the generalization error bound $\varepsilon$ of the PRM. Although $\varepsilon$ is unknown at test time, the reward gap $\gamma(q)$ can be influenced by the search configuration. For example, adjusting sampling parameters such as top-$k$, top-$p$, and temperature can affect the diversity of candidate reasoning paths generated by the policy model. This, in turn, modifies the reward separation among candidates. 
However, several challenges arise in practice: \textbf{(i)} different PRMs may have different generalization behaviors, requiring different reward margins \(\gamma(q)\) to ensure reliable selection; \textbf{(ii)} generating candidate paths that satisfy a desired reward margin may require multiple sampling rounds. 
These observations motivate the design of a dynamic control mechanism that makes decisions based on generalization behaviors of PRMs.

\section{Methodology}
As we discussed in Section~\ref{sec:inspire}, an effective inference-time strategy should \textbf{(i)} be aware of the generalization behavior of the reward model, and \textbf{(ii)} increase $\gamma(q)$ dynamically without inducing additional compute cost. To address the first problem, we propose using the structural sparsity of $\phi$ as a proxy for estimating its generalization capacity. To address the second problem, we propose \emph{Compute-Aware Tree Search} (CATS), a dynamic compute allocation framework based on A2C framework. During inference, the actor observes the current reasoning state and outputs search hyperparameters. The critic estimates the utility of each action and provides feedback to optimize the actor via reinforcement learning. This design allows CATS to adaptively adjust computation at each reasoning step while maintaining a global compute budget.


\subsection{Estimation of $\varepsilon$ via Sparsity}

In practice, the true generalization error $\varepsilon_{\text{gen}}(\hat{\phi})$ in Theorem \ref{thm:pac-bayes} is unobservable and its PAC-Bayes upper bound depends on the prior density $P(\hat{\phi})$, which is rarely known in closed form. However, under structural assumptions on $\hat{\phi}$, we can approximate $\log(1/P(\hat{\phi}))$ using model-dependent statistics as a proxy. One common and well-motivated assumption is parameter sparsity, which reflects the idea that only a small subset of model parameters are relevant for capturing the reward signal. Sparsity-based priors have been widely used in PAC-Bayesian analysis to obtain non-vacuous generalization bounds \cite{muthukumar2023sparsity, lotfi2022pac}, and have also proven effective in various practical settings \cite{roy2021efficient, jiang2024minference}. Under sparsity-based priors, the KL-divergence term can be upper-bounded by a function of the number of nonzero parameters in $\hat{\phi}$. This yields the following sparsity-induced bound on the generalization error:
\begin{equation}
    \varepsilon_{\text{gen}}(\hat{\phi})
\;\le\;
\sqrt{
\frac{c \cdot \|\hat{\phi}\|_0 \cdot \log d + \log \frac{n}{\delta}}{2(n - 1)}
}.
\end{equation}
This expression provides a practical surrogate: models with fewer active parameters are expected to generalize better.
We provide empirical evidence in Appendix~\ref{app:sparsity}.

\subsection{Compute-Aware Tree Search}

To dynamically allocate compute budget at each reasoning step based on the generalization behavior of the reward model, we propose Compute-Aware Tree Search (CATS). In this approach, we formalize the reasoning process as a Markov Decision Process (MDP). Formally, we define the reasoning control problem as  $(\mathcal{S}, \mathcal{A}, P, r, \gamma)$. The state space $\mathcal{S}$ captures the current search context, including: the number of candidate paths at the current step, their associated reward scores, parameter sparsity of the reward model, and the maximum candidate paths that can be sampled. The action space $\mathcal{A}$ consists of a set of search hyperparameter configurations, including the number of additional candidates to sample, the number of candidates to retain for the next step, and the sampling parameters (e.g., top-$p$, top-$k$, and temperature). The transition function $P$ is deterministic: the next state is determined by applying the chosen action, either by sampling additional candidates and then retaining a subset, or by directly retaining a subset of existing candidates to advance to the next reasoning step. The reward function $r(s_t, a_t)$ is defined as follows:
\begin{equation}
    r(s_t,a_t) = -\lambda_c\cdot C(a_t) + \lambda_m\cdot \Delta_m(s_t,a_t) + \lambda_r\cdot\max_{h\in\mathcal{H}}R_\phi(q,h),
\end{equation}
where $C(a_t)$ denotes the additional candidate path incurred by action $a_t$, $\Delta_m(s_t,a_t)$ denotes the reward gap between the retained paths and discarded paths, $\max_{h\in\mathcal{H}}R_\phi(q,h)$ is the highest score of the candidate paths, $\lambda_c,\lambda_m,\lambda_r$ are hyperparameters. These rewards encourage high-quality generations and mitigate the risk of mis-pruning good paths. And $\gamma \in (0, 1]$ is a discount factor. Under this formulation, the objective of CATS is to learn a control policy $\pi_\nu(a_t \mid s_t)$ that maximizes the expected return throughout the reasoning process.

We employ an A2C framework \cite{sutton1999policy} to optimize the tree expansion policy via single-step temporal difference (TD) learning. The actor network $\pi_\nu(a_t \mid s_t)$ is based on a multi-layer perceptron (MLP) that outputs action probability. The critic network $V_\xi(s_t)$ is implemented as a separate MLP that predicts the scalar value of a given state. At each search step, the agent collects transition tuples $(s_t, a_t, r_t, s_{t+1})$ and computes the TD error:
\begin{equation}
    \delta_t = r_t + \gamma V_\xi(s_{t+1}) - V_\xi(s_t),
\end{equation}
which serves both as a regression target for the critic and as an advantage estimate for the actor. The critic is trained to minimize the squared TD error, while the actor is trained to maximize the expected return using the advantage-weighted log-probability objective:
\begin{equation}
\label{eq:obj_actor_critic}
    \mathcal{L}_{\text{critic}}(\xi) = \frac{1}{2} \left( \delta_t \right)^2,\mathcal{L}_{\text{actor}}(\nu) = -\log \pi_\nu(a_t \mid s_t) \cdot \delta_t.
\end{equation}
Gradients are computed with respect to $\xi$ and $\nu$, and updates are applied after each environment step. 
By optimizing Equation~\ref{eq:obj_actor_critic}, the actor learns to produce actions at each step that maximize reward. During the testing phase, the actor can generate candidate reasoning paths with higher PRM scores and larger reward gaps, which helps prevent mis-ranking and ultimately improves answer accuracy. The pseudo codes for training and using CATS are provided in Appendix~\ref{app:pse}.

\section{Experiments}
\label{sec:experiments}
\subsection{Implementation Details}
To train the actor and critic networks, we follow the procedure in \cite{lightmanLetsVerifyStep2023} and construct a training set using 12,000 examples from the MATH dataset \cite{hendrycksmath2021}. Each training sample consists of a question $q$ and its corresponding ground-truth answer $a^*(q)$. During training, we fix a policy LLM to generate candidate answers and collect data by scoring the reasoning paths under different PRMs. These trajectories are then used to train both the Actor and Critic networks. The Actor network is implemented as a two-layer MLP with a hidden dimension of $128$ while the Critic network is also implemented as a two-layer MLP with a hidden dimension of $256$. The hyperparameters $\lambda_c=0.2,\lambda_m=0.5,\lambda_r=0.3$. We use the Adam optimizer with a learning rate of $1\times10^{-3}$ and train the models under different compute budgets. Experiments are conducted on 8 A800 GPUs. The ablation study is in Appendix~\ref{app:abl}.

\begin{figure}[t]
    \centering
    \includegraphics[width=.85\linewidth]{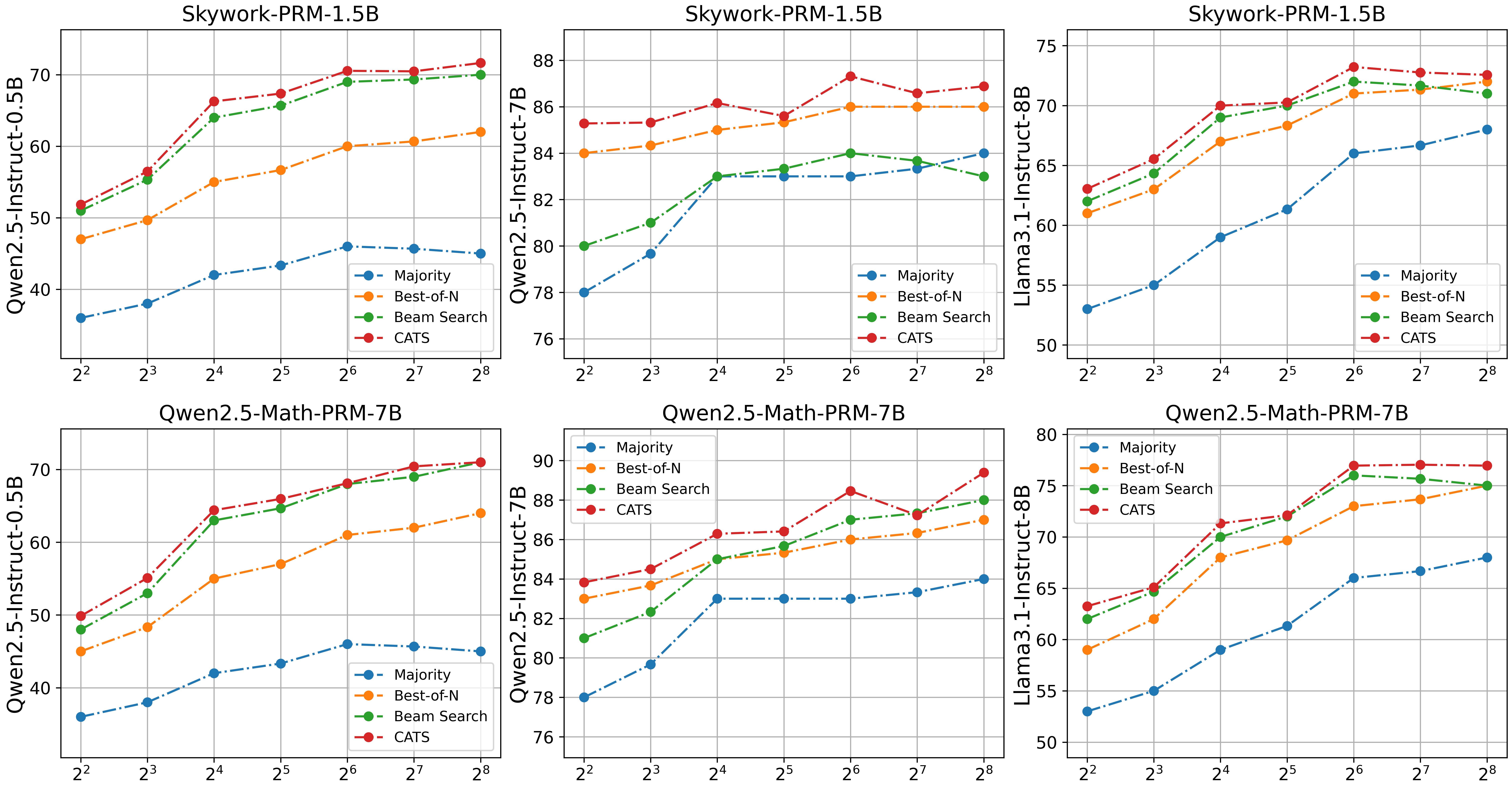}
    \caption{The comparison results on the MATH-500 dataset for different policy models and PRMs.}
    \label{fig:math}
\end{figure}

\begin{figure}[t]
    \centering
    \includegraphics[width=.85\linewidth]{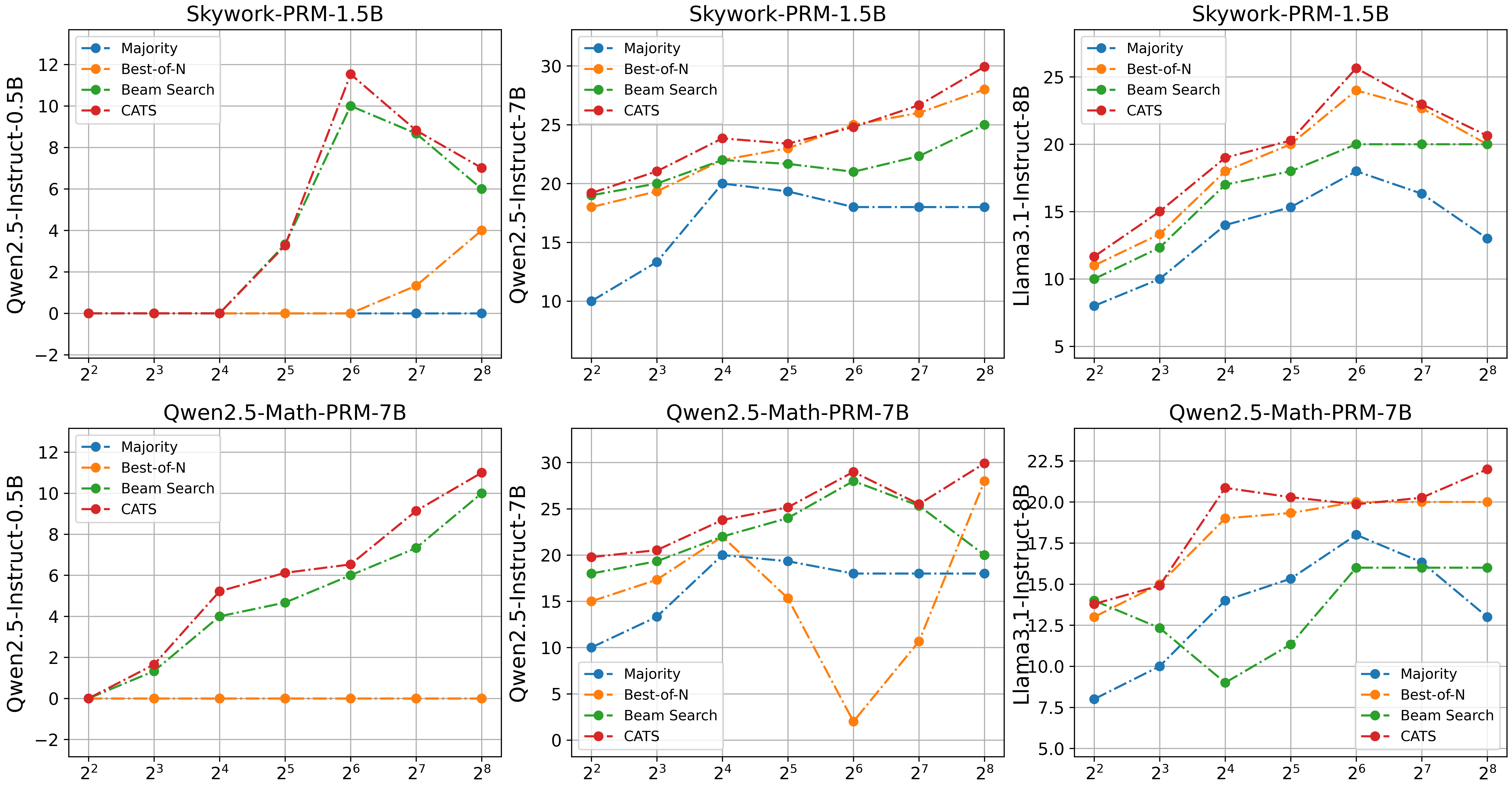}
    \caption{The comparison results on the AIME24 dataset for different policy models and PRMs.}
    \label{fig:aime}
\end{figure}

\subsection{Experimental Setup}
We evaluate the proposed CATS method on two mathematical reasoning benchmarks: MATH-500 \cite{lightmanLetsVerifyStep2023} and AIME24 \cite{ai-mo2024}. To assess the generality of our approach, we evaluate CATS across a diverse set of frozen policy models, including LLaMA3.1-Instruct-8B, LLaMA3.2-Instruct-1B, Qwen2.5-Instruct (0.5B, 3B, and 7B). For the reward models, we include Math-Shepherd-PRM-7B, RLHFlow-PRM-Mistral-8B, RLHFlow-PRM-DeepSeek-8B, Skywork-PRM-1.5B, and Qwen2.5-Math-PRM-7B. The maximum number of candidate reasoning paths is 256. We compare CATS with three external TTS methods: Best-of-N, Beam Search with width $M=4$, and Majority Voting. For each method, we report answer accuracy as a function of the number of candidate paths, using $N \in \{4, 8, 16, 32, 64, 128, 256\}$. The full results are provided in Appendix~\ref{app:full}. And we also present the comparison results with other external TTS methods in Appendix~\ref{sec:compare}.

\subsection{Results}
The performance of Qwen2.5-Instruct-0.5B, Qwen2.5-Instruct-7B, and LLaMA3.1-Instruct-8B on the MATH-500 dataset, evaluated under two PRMs: Skywork-PRM-1.5B and Qwen2.5-Math-PRM-7B, is shown in Figure~\ref{fig:math}. From the results, we observe that for different compute budgets and PRMs, the best-performing baseline (excluding CATS) varies. In contrast, CATS consistently outperforms all baselines across all budget levels and PRMs. The results on the AIME24 dataset are shown in Figure~\ref{fig:aime}. Although AIME24 poses greater challenges than MATH-500, CATS continues to outperform other external TTS methods. These findings confirm the effectiveness of our proposed approach.

\section{Conclusion}
In this work, we analyze how the generalization error of the PRM influences the performance of the external TTS. By quantifying the mis-ranking risk induced by reward prediction error, we derive an explicit lower bound involving the reward gap and path coverage probability, which motivates the need for adaptive control over reasoning computation. Building on this insight, we propose CATS, a dynamic inference strategy based on the A2C framework. CATS learns to allocate compute based on PRM proxies and effectively balances reward separation and candidate diversity. Extensive experiments on MATH and AIME24 demonstrate that CATS consistently outperforms standard search strategies across a wide range of policy models and PRMs.

\bibliography{main}
\bibliographystyle{unsrt}

\newpage
\appendix

\section*{Appendix}

\section{Limitations}
Our theoretical results are based on a set of reasonable assumptions, which may not fully hold in practical scenarios. However, these assumptions do not undermine the validity of the analysis, and our empirical results support the overall conclusions.

\section{Proof of Theorem~\ref{thm:pac-bayes}}
\label{app:proof_1}
\begin{theorem}[PAC-Bayes Generalization Bound for Reward Models]
Let $P, Q \in \mathcal{P}(\Phi)$ be any prior and posterior distributions over reward model parameters, and let $\ell$ be a bounded loss function taking values in $[0,1]$. Then, for any $\delta \in (0,1]$, with probability at least $1 - \delta$ over the choice of training set $\mathcal{S} \sim \mathcal{D}^n$, the following inequality holds:
\begin{equation}
    \mathbb{E}_{\phi \sim Q} \left[ \mathcal{L}_{\mathcal{D}}(\phi) \right]
    \;\leq\;
    \mathbb{E}_{\phi \sim Q} \left[ \mathcal{L}_{\mathcal{S}}(\phi) \right]
    +
    \sqrt{\frac{ \mathrm{KL}(Q \| P) + \log\frac{n}{\delta} }{2(n - 1)}}.
\end{equation}
\end{theorem}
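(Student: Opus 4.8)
The plan is to follow the classical McAllester-style PAC-Bayes argument, whose two ingredients are a change-of-measure (variational) inequality for the KL divergence and a per-hypothesis exponential-moment bound coming from Hoeffding's inequality. Throughout, write $g(\phi) := \mathcal{L}_{\mathcal{D}}(\phi) - \mathcal{L}_{\mathcal{S}}(\phi)$ for the deviation whose $Q$-average we must control. First I would invoke the Donsker--Varadhan variational representation of the KL divergence: for any measurable $h\colon \Phi \to \mathbb{R}$ and any $Q \ll P$,
\[
\mathbb{E}_{\phi \sim Q}[h(\phi)] \le \mathrm{KL}(Q\|P) + \log \mathbb{E}_{\phi \sim P}\bigl[e^{h(\phi)}\bigr].
\]
Applying this with the choice $h(\phi) = 2(n-1)\, g(\phi)^2$ converts the quantity of interest into the data-\emph{independent} prior moment $\mathbb{E}_{\phi \sim P}[\exp(2(n-1)g(\phi)^2)]$ plus the $\mathrm{KL}$ term, which is the whole point of passing through the prior $P$.

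Next I would control that prior moment in expectation over the sample. Since $P$ does not depend on $\mathcal{S}$, Fubini's theorem lets me exchange the two expectations, reducing the task to the single-hypothesis estimate
\[
\mathbb{E}_{\mathcal{S} \sim \mathcal{D}^n}\!\left[\exp\!\bigl(2(n-1)\,g(\phi)^2\bigr)\right] \le n,
\]
which holds because $\ell \in [0,1]$ makes $g(\phi)$ a bounded, sub-Gaussian average of $n$ i.i.d.\ terms: Hoeffding's inequality gives $\Pr_{\mathcal{S}}[\,|g(\phi)| \ge t\,] \le 2e^{-2nt^2}$, and integrating this tail against the exponential weight yields the stated moment bound. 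I expect this per-hypothesis moment estimate to be the main obstacle, since pinning down the precise constant that produces exactly the $2(n-1)$ in the denominator and the clean $\log(n/\delta)$ in the numerator requires the careful Hoeffding-to-MGF conversion (or an appeal to a Maurer-type lemma) rather than a crude union bound; a looser integration would merely change this constant without affecting the shape of the result.

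Finally, with the moment bounded by $n$ in expectation, Markov's inequality applied to the nonnegative random variable $\mathbb{E}_{\phi \sim P}[e^{h(\phi)}]$ shows that with probability at least $1-\delta$ over $\mathcal{S}$ it is at most $n/\delta$, so $\log \mathbb{E}_{\phi \sim P}[e^{h(\phi)}] \le \log(n/\delta)$ on this high-probability event. Substituting back into the change-of-measure inequality gives
\[
2(n-1)\,\mathbb{E}_{\phi\sim Q}\bigl[g(\phi)^2\bigr] \le \mathrm{KL}(Q\|P) + \log\tfrac{n}{\delta}.
\]
I would then apply Jensen's inequality in the form $\bigl(\mathbb{E}_{\phi\sim Q}[g(\phi)]\bigr)^2 \le \mathbb{E}_{\phi\sim Q}[g(\phi)^2]$, isolate $\mathbb{E}_{\phi\sim Q}[g(\phi)]$ by taking square roots, and recall that $\mathbb{E}_{\phi\sim Q}[g(\phi)] = \mathbb{E}_{\phi\sim Q}[\mathcal{L}_{\mathcal{D}}(\phi)] - \mathbb{E}_{\phi\sim Q}[\mathcal{L}_{\mathcal{S}}(\phi)]$, which delivers exactly the claimed square-root bound and completes the proof.
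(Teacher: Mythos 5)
Your proposal is correct, and it takes a genuinely different route from the paper's proof. You apply Donsker--Varadhan to the \emph{squared} deviation $h(\phi)=2(n-1)\,g(\phi)^2$, control the prior moment $\mathbb{E}_{\mathcal{S}}\mathbb{E}_{\phi\sim P}[e^{h(\phi)}]$ via a Hoeffding-to-MGF conversion (correctly flagged as the delicate step: the crude tail integration only yields a constant of order $2n$, and getting exactly $n$ requires a Maurer-type lemma, e.g.\ $\mathbb{E}[e^{n\,\mathrm{kl}(\hat\mu\|\mu)}]\le n+1$ combined with Pinsker), then finish with Markov and Jensen in the form $\bigl(\mathbb{E}_{\phi\sim Q}[g(\phi)]\bigr)^2\le\mathbb{E}_{\phi\sim Q}[g(\phi)^2]$. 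The paper instead applies Donsker--Varadhan to the \emph{linear} function $f(\phi)=\lambda\,g(\phi)$, uses Hoeffding's MGF bound $\mathbb{E}_{\mathcal{S}}[e^{\lambda g(\phi)}]\le e^{\lambda^2/(8n)}$, applies Fubini and Markov, and then divides by $\lambda$ and ``optimizes'' with $\lambda^{\star}=4\sqrt{(n-1)/2}$. The trade-off is instructive: the paper's per-step estimates are simpler, but its final step is delicate --- the $1-\delta$ event depends on $\lambda$, so $\lambda$ must be fixed before the sample is drawn and cannot be tuned to the data-dependent quantity $\mathrm{KL}(Q\|P)$; moreover, plugging the stated $\lambda^{\star}$ into $\frac{\lambda}{8n}+\frac{\mathrm{KL}(Q\|P)+\log(1/\delta)}{\lambda}$ does not by itself produce the $\log\frac{n}{\delta}$ numerator (making this rigorous typically requires a union bound over a grid of $\lambda$ values, which is precisely where a $\log n$ factor enters). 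Your quadratic route avoids any free parameter altogether: the function inside Donsker--Varadhan is fixed in advance, and the constants $2(n-1)$ and $\log\frac{n}{\delta}$ emerge directly from the moment bound, so your argument is the one that delivers exactly the stated inequality in an airtight way.
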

\begin{proof}
The subsequent proof follows the classical PAC-Bayes derivation \cite{mcallester1999pac, seeger2002pac} and we prove it again in our scenario. Throughout, we work on the probability space induced by the i.i.d. sample $\mathcal{S}\sim\mathcal{D}^n$ (Assumption~\ref{ass:indist}).

For any measurable function $f\colon\Phi\to\mathbb{R}$ and any
posterior $Q\!\ll\! P$, the \emph{Donsker--Varadhan} variational formula
yields:
\begin{equation}
\label{eq:dv}
  \mathbb{E}_{\phi\sim Q}[f(\phi)]
  \;\le\;
  \frac{1}{\lambda}\!
  \Bigl(
     \log\mathbb{E}_{\phi\sim P}\!\bigl[e^{\lambda f(\phi)}\bigr]
     + \mathrm{KL}(Q\Vert P)
  \Bigr),
  \quad
  \forall\,\lambda>0.
\end{equation}
Fix $\phi\in\Phi$ and let
\begin{equation}
   Z_i := \ell(R_\phi(q_i,h_i),y_i)\in[0,1]
\end{equation}
for $i=1,\dots,n$. By Hoeffding’s inequality,
\begin{equation}
  \mathbb{E}_{\mathcal{S}}\Bigl[%
    \exp\bigl(\lambda(\mathcal{L}_{\mathcal{D}}(\phi)-\mathcal{L}_{\mathcal{S}}(\phi))\bigr)
  \Bigr]
  \;\le\;
  \exp\!\Bigl(\frac{\lambda^{2}}{8n}\Bigr),
  \qquad
  \forall\,\lambda\in\mathbb{R}.
\end{equation}
Taking expectations over $\phi\sim P$ and applying Fubini’s theorem gives
\begin{equation}
\label{eq:mgf}
  \mathbb{E}_{\mathcal{S}}\!
  \Bigl[
    \mathbb{E}_{\phi\sim P}\!
      \bigl[
        e^{\lambda(\mathcal{L}_{\mathcal{D}}(\phi)-\mathcal{L}_{\mathcal{S}}(\phi))}
      \bigr]
  \Bigr]
  \;\le\;
  \exp\!\Bigl(\frac{\lambda^{2}}{8n}\Bigr).
\end{equation}
Define the random variable
\begin{equation}
  \Psi(\mathcal{S})
  \;:=\;
  \log
  \mathbb{E}_{\phi\sim P}\!
    \bigl[
      e^{\lambda(\mathcal{L}_{\mathcal{D}}(\phi)-\mathcal{L}_{\mathcal{S}}(\phi))}
    \bigr]
  -\frac{\lambda^{2}}{8n}.
\end{equation}
By Equation~\ref{eq:mgf}, $\mathbb{E}_{\mathcal{S}}[\exp(\Psi(\mathcal{S}))]\le 1$.
Hence, by Markov’s inequality,
\begin{equation}
  \Pr_{\mathcal{S}}
  \bigl[\Psi(\mathcal{S})>\log\frac{1}{\delta}\bigr]
  \;\le\;
  \delta.
\end{equation}
Thus, with probability at least $1-\delta$ over $\mathcal{S}\sim\mathcal{D}^n$,
\begin{equation}
\label{eq:psi}
  \log
  \mathbb{E}_{\phi\sim P}\!
    \bigl[
      e^{\lambda(\mathcal{L}_{\mathcal{D}}(\phi)-\mathcal{L}_{\mathcal{S}}(\phi))}
    \bigr]
  \;\le\;
  \frac{\lambda^{2}}{8n}
  +\log\frac{1}{\delta}.
\end{equation}
Condition on any $\mathcal{S}$ satisfying Equation~\ref{eq:psi}.
Applying Equation~\ref{eq:dv} with
\begin{equation}
  f(\phi)=\lambda(\mathcal{L}_{\mathcal{D}}(\phi)-\mathcal{L}_{\mathcal{S}}(\phi))
\end{equation}
and the bound in Equation~\ref{eq:psi} yields
\begin{equation}
  \lambda\,
  \mathbb{E}_{\phi\sim Q}\!
     \bigl[\mathcal{L}_{\mathcal{D}}(\phi)-\mathcal{L}_{\mathcal{S}}(\phi)\bigr]
  \;\le\;
  \frac{\lambda^{2}}{8n}
  + \mathrm{KL}(Q\Vert P)
  + \log\frac{1}{\delta}.
\end{equation}
Dividing by $\lambda>0$ and optimizing w.r.t.\ $\lambda$ gives the tightest (sub-Gaussian) bound at
\begin{equation}
  \lambda^{\star}=4\,\bigl(\frac{n-1}{2}\bigr)^{1/2}.
\end{equation}
Plugging $\lambda^{\star}$ back leads to
\begin{equation}
  \mathbb{E}_{\phi\sim Q} [\mathcal{L}_{\mathcal{D}}(\phi)]
  \;\le\;
  \mathbb{E}_{\phi\sim Q} [\mathcal{L}_{\mathcal{S}}(\phi)]
  +
  \sqrt{\frac{\mathrm{KL}(Q\Vert P)+\log\!\frac{n}{\delta}}{2(n-1)}}.
\end{equation}
Since the derivation holds on the
$1-\delta$ event triggered in Equation~\ref{eq:psi},
the bound is valid with the claimed confidence level,
which completes the proof.
\end{proof}

\section{Proof of Theorem~\ref{thm:accuracy}}
\label{app:proof_2}
\begin{theorem}[Answer–Accuracy Bound with Reward‐Gap Parameter]
Let \(q\) be a fixed question, and let $\mathcal{H} = \{h_{1},\dots,h_{N}\} \sim\pi_{\theta}^{\otimes N}$ be \(N\) i.i.d.\ candidate reasoning paths.  Define $h^{*} \;=\;\arg\max_{h\in\mathcal{H}}R^{*}(q,h)$, and $h_{\mathrm{sel}} =\arg\max_{h\in\mathcal{H}}R_{\phi}(q,h)$. The reward‐gap $\gamma(q)=R^{*}\bigl(q,h^{*}\bigr)-\max_{h\in\mathcal{H}\setminus\{h^{*}\}}
       R^{*}(q,h)\ge 0.$ Let
$p_{N,\tau}(q)=\Pr_{\mathcal{H}\sim\pi_{\theta}^{\otimes N}}\bigl[\exists\,h\in\mathcal{H}:\;R^{*}(q,h)\ge\tau\bigr].$
Suppose further that the following hold:
\begin{itemize}
    \item There exist \(\varepsilon\in(0,1]\) and \(\delta\in(0,1)\) such that $\Pr\!\Bigl[\sup_{h\in\mathcal{H}}\bigl|R_{\phi}(q,h)-R^{*}(q,h)\bigr|\le\varepsilon\Bigr]\;\ge\;1-\delta$. And denote the high‐probability event by \(\mathcal{G}=\{\sup_{h}|R_{\phi}-R^{*}|\le\varepsilon\}\).
    \item Conditioned on \(\mathcal{H}\), the deviations \(\Delta_{h}=R_{\phi}(q,h)-R^{*}(q,h)\) are independent, mean‐zero, and satisfy \(|\Delta_{h}|\le\varepsilon\) almost surely.
    \item Assumptions \ref{assump:path-to-answer} and Assumption \ref{assump:coverage} hold.
\end{itemize}
Then the probability of selecting a correct answer satisfies
\begin{equation}
  \Pr\bigl[a(h_{\mathrm{sel}})=a^{*}(q)\bigr]
  \;\ge\;
  p_{N,\tau}(q)\,\Bigl[
    1 \;-\;\delta
        \;-\;(N-1)\,\exp\!\bigl(-\frac{\gamma(q)^{2}}{8\varepsilon^{2}}\bigr)
  \Bigr].
\end{equation}
\end{theorem}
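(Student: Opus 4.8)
The plan is to reduce the correctness event to two simpler events and then control each. First I would observe that $a(h_{\mathrm{sel}})=a^*(q)$ is guaranteed whenever (i) the candidate set contains a path of true reward at least $\tau$, so that $R^*(q,h^*)\ge\tau$, and (ii) the PRM actually ranks this path on top, i.e.\ $h_{\mathrm{sel}}=h^*$. Under these two conditions, $R^*(q,h_{\mathrm{sel}})=R^*(q,h^*)\ge\tau$, so Assumption~\ref{assump:path-to-answer} forces $a(h_{\mathrm{sel}})=a^*(q)$. Hence
\[
  \Pr\bigl[a(h_{\mathrm{sel}})=a^*(q)\bigr]
  \;\ge\;
  \Pr\bigl[\{R^*(q,h^*)\ge\tau\}\cap\{h_{\mathrm{sel}}=h^*\}\bigr].
\]

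The coverage event $\{R^*(q,h^*)\ge\tau\}=\{\exists\,h:\ R^*(q,h)\ge\tau\}$ depends only on $\mathcal{H}$ and $R^*$, and has probability $p_{N,\tau}(q)$ by definition. Conditioning on $\mathcal{H}$, the ranking event $\{h_{\mathrm{sel}}=h^*\}$ depends on the deviations $\Delta_h=R_\phi(q,h)-R^*(q,h)$, which by the second hypothesis are conditionally independent, mean-zero, and bounded by $\varepsilon$. I would therefore factor the joint probability as $p_{N,\tau}(q)$ times a conditional lower bound on correct ranking, producing the leading factor $p_{N,\tau}(q)$ in the claim.

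Next I would bound the mis-ranking probability. The best path is dethroned only if some $h\ne h^*$ satisfies $R_\phi(q,h)\ge R_\phi(q,h^*)$. Writing this out,
\[
  R_\phi(q,h)-R_\phi(q,h^*)
  =\bigl(R^*(q,h)-R^*(q,h^*)\bigr)+(\Delta_h-\Delta_{h^*})
  \;\le\;-\gamma(q)+(\Delta_h-\Delta_{h^*}),
\]
since $R^*(q,h^*)-R^*(q,h)\ge\gamma(q)$ by the definition of the reward gap. Thus $\{R_\phi(q,h)\ge R_\phi(q,h^*)\}\subseteq\{D_h\ge\gamma(q)\}$, where $D_h:=\Delta_h-\Delta_{h^*}$ is mean-zero and supported in $[-2\varepsilon,2\varepsilon]$. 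The crucial step is to apply Hoeffding's inequality to the single variable $D_h$ of range $4\varepsilon$, giving $\Pr[D_h\ge\gamma(q)]\le\exp(-\gamma(q)^2/(8\varepsilon^2))$; treating the difference as one bounded variable of width $4\varepsilon$ (rather than two independent ones) is exactly what produces the stated constant $8\varepsilon^2$. A union bound over the $N-1$ competitors then caps the mis-ranking probability at $(N-1)\exp(-\gamma(q)^2/(8\varepsilon^2))$.

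Finally I would assemble the pieces. The uniform deviation bound $|\Delta_h|\le\varepsilon$ is guaranteed only on the event $\mathcal{G}$, which fails with probability at most $\delta$; charging this failure to the error budget and adding the union-bound concentration term shows that, conditioned on coverage, correct ranking occurs with probability at least $1-\delta-(N-1)\exp(-\gamma(q)^2/(8\varepsilon^2))$. Multiplying by the coverage factor $p_{N,\tau}(q)$ delivers the theorem. The main obstacle I anticipate is the clean bookkeeping of the three sources of randomness---the draw of $\mathcal{H}$ governing coverage, the high-probability event $\mathcal{G}$ on which $\varepsilon$ is a valid uniform bound, and the conditional concentration of the $\Delta_h$---so that the $\delta$ term and the Hoeffding term add rather than interact. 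In particular one must either verify that conditioning on $\mathcal{G}$ preserves the mean-zero structure required by Hoeffding, or, more cleanly, apply the concentration bound to the unconditional mean-zero variables and absorb the failure of $\mathcal{G}$ separately into the additive $\delta$.
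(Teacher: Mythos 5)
Your proposal is correct and follows essentially the same route as the paper's proof: decompose failure into a coverage event (no path with $R^{*}(q,h)\ge\tau$) and a mis-ranking event ($h_{\mathrm{sel}}\neq h^{*}$), bound mis-ranking by Hoeffding plus a union bound over the $N-1$ competitors, and charge $\Pr[\mathcal{G}^{c}]\le\delta$ additively. If anything, your derivation of the exponent---reducing $\{R_{\phi}(q,h)\ge R_{\phi}(q,h^{*})\}$ to $\{D_{h}\ge\gamma(q)\}$ with $D_{h}=\Delta_{h}-\Delta_{h^{*}}$ a single mean-zero variable of range $4\varepsilon$---is more explicit than the paper's (which invokes a ``gap of $\gamma(q)-2\varepsilon$'' that does not visibly produce the constant $8\varepsilon^{2}$), and the bookkeeping caveat you flag is real but shared by the paper: the clean unconditional argument yields $p_{N,\tau}(q)\bigl[1-(N-1)e^{-\gamma(q)^{2}/8\varepsilon^{2}}\bigr]-\delta$, which is slightly weaker than the stated $p_{N,\tau}(q)\bigl[1-\delta-(N-1)e^{-\gamma(q)^{2}/8\varepsilon^{2}}\bigr]$ whenever $p_{N,\tau}(q)<1$.
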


\begin{proof}
Define the failure events
\begin{equation}
  E_{1}
  = \bigl\{\mathcal{H}\cap\{h:R^{*}(q,h)\ge\tau\}=\varnothing\bigr\},
  \quad
  E_{2}
  = \{\,h_{\mathrm{sel}}\neq h^{*}\}.
\end{equation}
By soft‐correctness, success \(\{a(h_{\mathrm{sel}})=a^{*}(q)\}\)
is the complement of \(E_{1}\cup E_{2}\).  We bound:
\begin{equation}
  \Pr(E_{1}) = 1 - p_{N,\tau}(q).
\end{equation}
Condition on \(E_{1}^{c}\) so that \(h^{*}\) exists.  On the event
\(\mathcal{G}\) we have
\begin{equation}
  R_{\phi}(q,h_{\mathrm{sel}})
  \;\ge\;
  R_{\phi}(q,h^{*})
  \quad\Longrightarrow\quad
  R^{*}(q,h_{\mathrm{sel}})
  \;\ge\;
  R^{*}(q,h^{*}) \;-\; 2\varepsilon.
\end{equation}
Hence any competitor \(h\neq h^{*}\) must overcome a gap of at least
\(\gamma(q)-2\varepsilon\).  By Hoeffding’s inequality for the bounded,
independent deviations \(\{\Delta_{h}\}_{h\neq h^{*}}\),
\begin{equation}
  \Pr\bigl[E_{2}\mid E_{1}^{c},\,\mathcal{G}\bigr]
  \;\le\;
  (N-1)\,
  \exp\!\bigl(-\frac{\gamma(q)^{2}}{8\varepsilon^{2}}\bigr).
\end{equation}
Finally, applying the law of total probability and the union bound gives
\begin{align*}
  \Pr(E_{1}\cup E_{2})
  &\le \Pr(E_{1})
     \;+\;\Pr(E_{2}\cap\mathcal{G}\mid E_{1}^{c})\Pr(E_{1}^{c})
     \;+\;\Pr(\mathcal{G}^{c}) \\[4pt]
  &\le (1-p_{N,\tau}(q))
     \;+\;p_{N,\tau}(q)\,(N-1)e^{-\gamma(q)^{2}/8\varepsilon^{2}}
     \;+\;\delta.
\end{align*}
Subtracting from 1 yields the bound
Equation \ref{eq:accuracy‐gap‐bound}.  The asymptotic form for
\(\varepsilon/\gamma(q)\to0\) follows by observing that
\(\exp(-\gamma(q)^{2}/8\varepsilon^{2})\to0\).
\end{proof}

\section{Proof of Corollary~\ref{cor:budget}}
\begin{corollary}[Target Accuracy Constraint on Sampling and Margin]
Given a generalization error bound \(\varepsilon > 0\), a confidence parameter \(\delta\in(0,1)\), and a target answer accuracy level \(\alpha \in (0,1)\). Under the assumptions of Theorem~\ref{thm:accuracy}, if one wishes to guarantee $\Pr[a(h_{\mathrm{sel}})=a^*(q)] \;\ge\; \alpha$, then the sampling coverage probability must satisfy
\begin{equation}
   p_{N,\tau}(q)
   \;\ge\;
   \frac{\alpha}
        {1 - \delta - (N\!-\!1)\,
            \exp\!\left(-\gamma(q)^2\,/\,8\varepsilon^2\right)} .
\end{equation}
\end{corollary}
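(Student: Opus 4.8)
The plan is to obtain the result as a direct algebraic consequence of the accuracy lower bound already established in Theorem~\ref{thm:accuracy}. That theorem hands us a lower bound on $\Pr[a(h_{\mathrm{sel}})=a^{*}(q)]$ expressed in terms of the coverage probability $p_{N,\tau}(q)$, the reward gap $\gamma(q)$, the generalization error bound $\varepsilon$, and the confidence level $\delta$. Consequently, no new probabilistic machinery is needed: the entire corollary reduces to imposing the target-accuracy requirement on this existing bound and solving for $p_{N,\tau}(q)$.

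Concretely, I would first recall the guarantee from Theorem~\ref{thm:accuracy},
\begin{equation*}
  \Pr[a(h_{\mathrm{sel}})=a^{*}(q)]
  \;\ge\;
  p_{N,\tau}(q)\Bigl[1-\delta-(N-1)\exp\bigl(-\gamma(q)^{2}/8\varepsilon^{2}\bigr)\Bigr].
\end{equation*}
To certify $\Pr[a(h_{\mathrm{sel}})=a^{*}(q)]\ge\alpha$ through this bound, it suffices to force the right-hand side to be at least $\alpha$, i.e.\ to require
\begin{equation*}
  p_{N,\tau}(q)\Bigl[1-\delta-(N-1)\exp\bigl(-\gamma(q)^{2}/8\varepsilon^{2}\bigr)\Bigr]\;\ge\;\alpha .
\end{equation*}
Abbreviating the bracketed factor as $B:=1-\delta-(N-1)\exp(-\gamma(q)^{2}/8\varepsilon^{2})$, the final step is simply to divide through by $B$, which produces $p_{N,\tau}(q)\ge\alpha/B$, exactly the claimed inequality.

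The single point that demands care is the sign of $B$: dividing by $B$ preserves the inequality direction only when $B>0$. I would therefore state explicitly that the conclusion is meaningful precisely in the regime $1-\delta>(N-1)\exp(-\gamma(q)^{2}/8\varepsilon^{2})$, where the denominator is strictly positive; when $B\le 0$ the lower bound of Theorem~\ref{thm:accuracy} is itself non-positive and cannot certify any $\alpha\in(0,1)$, so no finite coverage requirement can guarantee the target. This positivity check is the only genuine subtlety, and it is best dispatched up front so that the subsequent division is valid. I would additionally note that the derived condition is \emph{sufficient} rather than necessary: it is obtained by pushing the theorem's lower bound above $\alpha$, whereas the true accuracy may exceed the bound, so a smaller $p_{N,\tau}(q)$ could in principle still attain accuracy $\alpha$.
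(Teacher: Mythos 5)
Your proposal is correct and follows exactly the paper's own argument: invoke the lower bound from Theorem~\ref{thm:accuracy}, require its right-hand side to be at least $\alpha$, and divide by the bracketed factor under the positivity assumption $1-\delta-(N-1)\exp(-\gamma(q)^2/(8\varepsilon^2))>0$, which the paper likewise treats as a standing assumption. Your additional remarks on the sign of the denominator and on sufficiency versus necessity are sound but do not change the route.
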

\begin{proof}
By invoking Theorem~\ref{thm:accuracy} we have
\begin{equation*}
  \Pr\bigl[a(h_{\mathrm{sel}})=a^*(q)\bigr]
  \;\ge\;
  p_{N,\tau}(q)\,\Bigl[\,1 - \delta \;-\;(N-1)\exp\!\Bigl(-\tfrac{\gamma(q)^2}{8\varepsilon^2}\Bigr)\Bigr].
\end{equation*}
In order to guarantee \(\Pr[a(h_{\mathrm{sel}})=a^*(q)]\ge\alpha\), it suffices to enforce
\begin{equation}
  p_{N,\tau}(q)\,\Bigl[\,1 - \delta \;-\;(N-1)e^{-\gamma(q)^2/(8\varepsilon^2)}\Bigr]
  \;\ge\;\alpha.
\end{equation}
Under the standing assumption that  $ 1 - \delta - (N-1)\exp(-\gamma(q)^2/(8\varepsilon^2))>0$, we may divide both sides by this positive quantity, yielding
\begin{equation}
  p_{N,\tau}(q)
  \;\ge\;
  \frac{\alpha}
       {1 - \delta - (N-1)\exp\!\bigl(-\gamma(q)^2/(8\varepsilon^2)\bigr)},  
\end{equation}
which is precisely the bound stated in Equation \ref{eq:alpha-constraint}.
\end{proof}

\section{Comparison with Other Methods}
\label{sec:compare}
In this section, we present a comparison of answer accuracy between CATS and several recent external TTS baselines across three benchmark datasets: GSM8K \cite{cobbe2021gsm8k}, MATH, and OlympiadBench \cite{he2024olympiadbench} in Table~\ref{tab:compare}.
\begin{table}[htb]
    \centering
    \caption{Comparison of answer accuracy with other external TTS methods on the GSM8K, MATH, and OlympiadBench datasets.}
    \resizebox{\linewidth}{!}{
    \begin{tabular}{llccc}
    \toprule
    Method & Base Model & GSM8K & Math & OlympiadBench \\
    \midrule
    STILL-1 \cite{jiang2024technical}        & Llama-3-8B-Instruct & - & - & 34.3 \\
    LiteSearch \cite{wang2024litesearch}     & Llama-3-8B-Instruct & 75.7 & - & - \\
    AlphaMath \cite{chen2024alphamath}      & DeepSeekMath-7B-Base & 83.2 & 64.0 & - \\
    MCTS-DPO \cite{xie2024monte}      & Llama-3.1-8B-Instruct & 85.7 & - & - \\
    NuminaMath-72B-CoT \cite{li2024numinamath} & Qwen2-72B         & 90.8 & 66.7 & 32.6 \\
    LLaMA-Berry \cite{zhang2024llama}    & Llama-3.1-8B-Instruct & 96.1 & 75.3 & 55.1 \\
    MCTSr \cite{zhang2024accessing}          & Llama-3-8B-Instruct   & 96.7 & 58.2 & - \\
    BoostStep \cite{zhang2025booststep} & Qwen2.5-Math-72B-Instruct & - & 85.2 & 52.7 \\
    \midrule
    CATS            & Llama-3.1-8B-Instruct & 97.1 & 76.9 & 56.1 \\
    CATS            & Llama-3.2-1B-Instruct & 88.4 & 61.8 & 33.6 \\ 
    CATS            & Qwen2.5-Instruct-3B & 96.5 & 79.3 & 38.1 \\
    CATS            & Qwen2.5-Instruct-7B &\bf 98.0 &\bf 89.4 &\bf 58.4 \\
    \bottomrule
    \end{tabular}
    }
    \label{tab:compare}
\end{table}
The results show that CATS consistently outperforms prior methods under comparable base models. Notably, CATS surpasses methods based on much larger models such as Qwen2-72B and DeepSeekMath-72B. For instance, on GSM8K, CATS with Qwen2.5-7B attains an accuracy of 98.0\%, exceeding the previous best result of 96.7\% reported by MCTSr. These results demonstrate the effectiveness of our proposed CATS, even with smaller model sizes.

\section{Full Results}
\label{app:full}
The full results of all policy models and PRMs in the MATH-500 dataset are provided in Figure \ref{fig:full_MATH}, and the results of AIME are provided in Figure \ref{fig:full_AIME}.

\begin{figure}[ht]
    \centering
    \includegraphics[width=\linewidth]{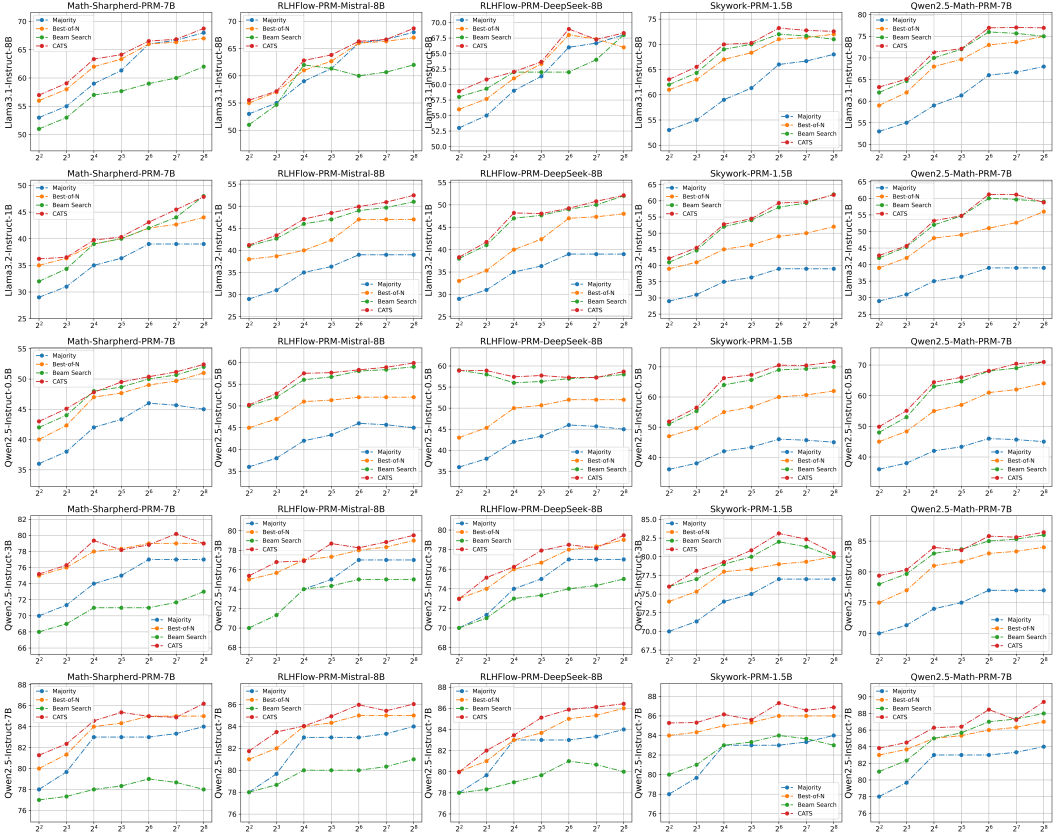}
    \caption{Full results on the MATH500 dataset.}
    \label{fig:full_MATH}
\end{figure}

\begin{figure}[ht]
    \centering
    \includegraphics[width=\linewidth]{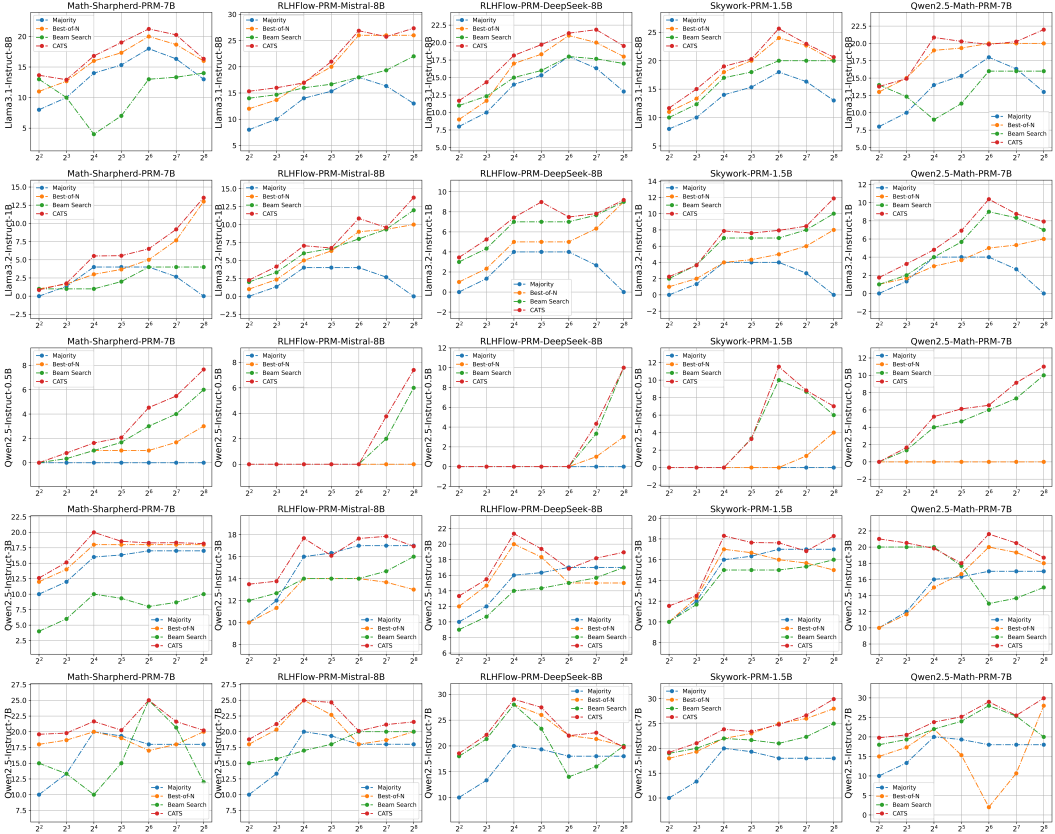}
    \caption{Full results on the AIME24 dataset.}
    \label{fig:full_AIME}
\end{figure}

\section{Analysis of Sparsity in PRM}
\label{app:sparsity}
In this section, we evaluate the validity of using sparsity as a proxy for reward model generalization error. Specifically, we consider two sparsity-based indicators: the overall parameter sparsity of the model and the sparsity of its output layer. The sparsity is calculated by counting the ratio of parameters with values smaller than $1\times 10^{-4}$. To assess generalization performance, we use the test set from the PRM800K dataset, where each example contains a question, a reasoning step, and a binary label indicating the correctness of that step. For each reward model, we compute the cross-entropy between its predicted reward scores and the ground-truth labels across the test set. This deviation reflects the degree of misalignment between predicted and true rewards, and thus serves as an empirical estimate of generalization error. The results is illustrated in Table~\ref{tab:sparse}.
\begin{table}[htb]
    \centering
    \caption{The relationship between sparsity of different PRMs and the test error.}
    \begin{tabular}{lcccc}
    \toprule
    PRM  & \#Params & Total Sparsity & Last layer Sparsity & Test Error \\
    \midrule
    Math-Shepherd-PRM-7B     & 7.11B & 0.0290 & 0.0196  & 2.78 \\
    RLHFlow-PRM-Mistral-8B   & 8.03B & 0.0068 & 0.0060  & 3.87 \\
    RLHFlow-PRM-DeepSeek-8B  & 8.03B & 0.0068 & 0.0060  & 3.87 \\
    Skywork-PRM-1.5B         & 1.54B & 0.0029 & 0.0059  & 4.43 \\
    Qwen2.5-Math-PRM-7B      & 7.08B & 0.0060 & 0.0080  & 3.47 \\
    \bottomrule
    \end{tabular}
    \label{tab:sparse}
\end{table}
As presented in Table~\ref{tab:sparse}, we can observe a clear correlation between model sparsity and generalization behavior, supporting the use of sparsity as a practical and observable proxy in our control framework.

\section{Ablation Study}
\label{app:abl}
\subsection{The Role of Parameter Sparsity in CATS}
To investigate the impact of parameter sparsity in the CATS framework, we conduct an ablation study comparing model performance with and without the parameter sparsity included in the state representation. Specifically, we use Qwen2.5-Instruct-7B as the policy model and evaluate the average performance on the MATH-500 and AIME24 datasets. The results are illustrated in Table \ref{tab:abl_sparse}.
\begin{table}[htb]
    \centering
    \caption{Average accuracy on MATH-500 and AIME24 using Qwen2.5-Instruct-7B, with and without parameter sparsity as part of the state.}
    \begin{tabular}{lcc}
    \toprule
    PRM & MATH-500 & AIME-24  \\
    \midrule
    \multicolumn{3}{c}{\textit{With Parameter Sparsity}} \\
    \midrule
    Math-Shepherd-PRM-7B    & 83.64 & 20.70 \\
    RLHFlow-PRM-Mistral-8B  & 84.05 & 21.13 \\
    RLHFlow-PRM-DeepSeek-8B & 83.61 & 22.27 \\
    Skywork-PRM-1.5B        & 85.98 & 23.55 \\
    Qwen2.5-Math-PRM-7B     & 86.26 & 24.09 \\
    \midrule
    \multicolumn{3}{c}{\textit{Without Parameter Sparsity}} \\
    \midrule
    Math-Shepherd-PRM-7B    & 82.14 & 20.40 \\
    RLHFlow-PRM-Mistral-8B  & 83.26 & 20.76 \\
    RLHFlow-PRM-DeepSeek-8B & 82.55 & 21.14 \\
    Skywork-PRM-1.5B        & 84.47 & 22.45 \\
    Qwen2.5-Math-PRM-7B     & 85.59 & 23.48 \\
    \bottomrule
    \end{tabular}
    \label{tab:abl_sparse}
\end{table}
The result in Table~\ref{tab:abl_sparse} shows that incorporating sparsity leads to improved performance across both datasets for all PRMs, highlighting its effectiveness as a proxy signal for reward model generalization error in CATS.

\subsection{Ablation of hyperparameters}
\paragraph{Ablation of $\lambda_c,\lambda_m,\lambda_r$.}
The hyperparameters $\lambda_c$, $\lambda_m$, and $\lambda_r$ correspond to the coefficients of the compute cost term $C(a_t)$, the margin-based reward difference $\Delta_m(s_t,a_t)$, and the maximum predicted reward $\max_{h\in\mathcal{H}} R_\phi(q,h)$ in the reward function $r(s_t,a_t)$, respectively. We perform an ablation study on the MATH-500 dataset using Qwen2.5-Math-PRM-7B as the policy model. For each configuration, we report the average accuracy across all reward models and compute budgets. The results are summarized in Table~\ref{tab:lambda}.
\begin{table}[htb]
    \centering
    \caption{Mean accuracy on MATH-500 under different combinations of $\lambda_c,\lambda_m,$ and $\lambda_r$}
    \label{tab:lambda}
    \begin{tabular}{ccc|c}
    \toprule
    $\lambda_c$ & $\lambda_m$ & $\lambda_r$ & Accuracy (\%) \\
    \midrule
    0.2 & 0.5 & 0.3 &\bf 84.71 \\
    0.3 & 0.3 & 0.3 & 84.51 \\
    0 & 0.5 & 0.5 & 84.15 \\
    0.5 & 0 & 0.5 & 84.05 \\
    0.5 & 0.5 & 0 & 84.12 \\
    0 & 0 & 1 & 83.46 \\
    0 & 1 & 0 & 83.65 \\
    1 & 0 & 0 & 82.25 \\
    \bottomrule
    \end{tabular}
\end{table}
From the results in Table~\ref{tab:lambda}, we can observe that the best performance is achieved when all three components are present, with moderate weighting ($\lambda_c = 0.2$, $\lambda_m = 0.5$, $\lambda_r = 0.3$). This suggests that each term in the reward function contributes to overall accuracy, and that carefully balancing these terms is essential for optimal performance. We also note that removing any single component leads to a consistent drop in accuracy. In particular, configurations that entirely exclude either $\lambda_m$ or $\lambda_r$ result in performance degradation of over 1\%. This indicates that $\Delta_m(s_t,a_t)$ and $\max_{h\in\mathcal{H}} R_\phi(q,h)$ are both critical. Interestingly, the configuration with only the cost term ($\lambda_c = 1, \lambda_m=\lambda_r=0$) performs the worst, highlighting that $C(a_t)$ alone is insufficient. These findings validate the design of our composite reward function and demonstrate the necessity of jointly modeling compute, ranking confidence, and reward scale.

\paragraph{Ablation of Network Structure}
In this section, we investigate how different architectural choices for the actor and critic networks affect the performance of CATS. We perform an ablation study on the MATH-500 dataset using Qwen2.5-Math-PRM-7B as the policy model. For each configuration, we report the average accuracy across all reward models and compute budgets. Specifically, we vary the number of layers and hidden dimensions of both networks to assess their impact on overall performance.
\begin{table}[t]
\centering
\caption{Mean accuracy on MATH-500 under different Actor and Critic architectures. Each result is averaged over all reward models and the compute budgets.}
\label{tab:actor-critic-ablation}
\begin{tabular}{cc|cc|c}
\toprule
\textbf{Actor Layers} & \textbf{Actor Dim} & \textbf{Critic Layers} & \textbf{Critic Dim} & \textbf{Accuracy (\%)} \\
\midrule
2 & 128 & 2 & 128 & 84.61 \\
2 & 128 & 2 & 256 &\bf 84.71 \\
2 & 256 & 2 & 128 & 84.44 \\
2 & 256 & 2 & 256 & 84.53 \\
2 & 512 & 2 & 512 & 84.23 \\
3 & 128 & 3 & 128 & 84.41 \\
3 & 128 & 3 & 256 & 84.66 \\
3 & 512 & 3 & 512 & 84.03 \\
\bottomrule
\end{tabular}
\end{table}
From Table~\ref{tab:actor-critic-ablation}, we can observe that the best result is achieved when using 2-layer actor and 2-layer critic networks with hidden dimensions of 128 and 256, respectively. Increasing the hidden size beyond 256 or adding more layers does not lead to further improvement and may even result in performance degradation, possibly due to overfitting or optimization instability. These results suggest that lightweight network architectures are sufficient for effective reasoning control in CATS.

\paragraph{Ablation of $\gamma$}
The discount factor $\gamma$ controls the relative importance of long-term versus immediate rewards in the value estimation of the critic. To evaluate its impact, we conduct an ablation study on the MATH-500 dataset using Qwen2.5-Math-PRM-7B as the policy model. We vary $\gamma$ across a range of values and report the average accuracy across all reward models and compute budgets. The results are presented in Table~\ref{tab:gamma-ablation}.
\begin{table}[htb]
\centering
\caption{Mean accuracy (\%) on MATH-500 for different values of the discount factor \(\gamma\), averaged over all reward models and compute budgets.}
\label{tab:gamma-ablation}
\begin{tabular}{c c c c c c c c}
\toprule
\(\gamma\) & 0.5 & 0.7 & 0.9 & 0.95 & 0.99 & 1.0 \\
\midrule
Accuracy (\%) & 84.21 & 84.33 & \textbf{84.71} & 84.63 & 84.60 & 84.56 \\
\bottomrule
\end{tabular}
\end{table}
From the results in Table~\ref{tab:gamma-ablation}, we observe that the choice of the discount factor $\gamma$ has an effect on performance. Accuracy improves as $\gamma$ increases from 0.5 to 0.9, with the best result achieved at $\gamma = 0.9$. This suggests that considering future reward signals over a moderate horizon helps the critic estimate value more effectively. However, further increasing $\gamma$ beyond 0.9 leads to a slight decline in performance. These findings indicate that a moderately high discount factor strikes a good balance between immediate reward and future planning in reasoning control.

\section{Pseudo Code for CATS}
\label{app:pse}
We provide the pseudo code for training and testing the proposed CATS algorithm. The training procedure is detailed in Algorithm~\ref{alg:train}, while the test-time inference procedure is outlined in Algorithm~\ref{alg:test}.

\begin{algorithm}[htb]
\label{alg:train}
\caption{Actor-Critic Training in Compute-Aware Tree Search}
\KwIn{Environment $\mathcal{E}$, PRM $R_\phi$, Actor $\pi_\nu(a \mid s)$, Critic $V_\phi(s)$}
\KwIn{Hyperparameters: learning rate $\eta$, discount factor $\gamma$, beam size $K$, max steps $T$}
\KwResult{Trained actor $\pi_\nu$ and critic $V_\phi$}

Initialize actor $\pi_\nu$ and critic $V_\phi$ with parameters from \texttt{cats\_config}\;

\For{$t = 1$ \KwTo $T$}{
    Reset environment: $(q, a_0) \leftarrow \mathcal{E}.\texttt{reset}()$\;
    Initialize root node $h_0$ with state $s_0 \leftarrow \texttt{ExtractFeatures}(h_0)$\;
    Initialize beam $\mathcal{B}_0 \leftarrow \{h_0\}$\;
    \For{$d = 1$ \KwTo max\_depth}{
        $\mathcal{B}_{d} \leftarrow \emptyset$ \tcp*{next-level beam}
        \ForEach{$h \in \mathcal{B}_{d-1}$}{
            $s_t \leftarrow \texttt{ExtractFeatures}(h)$\;
            Sample $a_t \sim \pi_\nu(\cdot \mid s_t)$ and compute $\log\pi_\nu(a_t \mid s_t)$\;
            Expand node $h$ using action $a_t$, producing and retain children $\{h'_i\}$\;
            Compute reward $r_t \leftarrow \texttt{Reward}(h, \{h'_i\})$\;
            Store $(s_t, a_t, \log\pi_\nu(a_t \mid s_t), r_t)$ in $h$ for each child\;
            Add $\{h'_i\}$ to $\mathcal{B}_{d}$\;
        }
        Prune $\mathcal{B}_{d}$ to Beam Size based on $R_\phi$\;
        \ForEach{$h' \in \mathcal{B}_d$}{
            $s_{t+1} \leftarrow \texttt{ExtractFeatures}(h')$\;
            Retrieve $(s_t, a_t, \log\pi_\nu(a_t \mid s_t), r_t)$ from parent node\;
            Compute TD-error: $\delta_t = r_t + \gamma V_\phi(s_{t+1}) - V_\phi(s_t)$\;
            Update critic: $\phi \leftarrow \phi - \eta \nabla_\phi \left( \frac{1}{2} \delta_t^2 \right)$\;
            Update actor: $\theta \leftarrow \theta + \eta \nabla_\theta \left( \log\pi_\nu(a_t \mid s_t) \cdot \delta_t \right)$\;
        }
    }
}
\end{algorithm}

\begin{algorithm}[htb]
\label{alg:test}
\caption{Inference with CATS}
\KwIn{Environment $\mathcal{E}$, PRM $R_\phi$, Trained Actor $\pi_\nu$, Beam size $K$, Max steps $T$}
\KwResult{Set of completed reasoning paths with associated scores}

Initialize environment: $(q, a_0) \leftarrow \mathcal{E}.\texttt{reset}()$\;
Initialize root node $h_0$ with state $s_0 \leftarrow \texttt{ExtractFeatures}(h_0)$\;
Initialize beam $\mathcal{B}_0 \leftarrow \{(h_0, \mathcal{E})\}$\;
Initialize completed set $\mathcal{F} \leftarrow \emptyset$\;

\For{$t = 1$ \KwTo $T$}{
    $\mathcal{B}_{\text{next}} \leftarrow \emptyset$\;
    \ForEach{$(h, \mathcal{E}_h) \in \mathcal{B}_{t-1}$}{
        $s_t \leftarrow \texttt{ExtractFeatures}(h)$\;
        Sample action $a_t = \arg\max \pi_\nu(s_t)$ \;
        Expand node $h$ using action $a_t$, generating children $\{h'_i\}$\;
        Retain children from $\{h'_i\}$ according to $a_t$\;
        \ForEach{retained $h'_i$}{
            Copy environment $\mathcal{E}' \leftarrow \mathcal{E}_h.\texttt{copy}()$\;
            Step forward: $(a, r, \texttt{done}, \_, \_) \leftarrow \mathcal{E}'.\texttt{step}(h'_i.\texttt{action})$\;
            \eIf{\texttt{done}}{
                Mark $h'_i$ as terminal and add $(h'_i, \mathcal{E}')$ to $\mathcal{F}$\;
            }{
                Further expand $h'_i$ using legal actions from $\mathcal{E}'$\;
                Add $(h'_i, \mathcal{E}')$ to $\mathcal{B}_{\text{next}}$\;
            }
        }
    }
    Prune $\mathcal{B}_{\text{next}}$ to top-$(K - |\mathcal{F}|)$ nodes by score\;
    $\mathcal{B}_t \leftarrow \mathcal{B}_{\text{next}}$\;
    \If{$|\mathcal{F}| = K$}{
        \textbf{break}
    }
}
Return completed set $\mathcal{F}$ as final trajectories\;

\end{algorithm}


\end{document}